\documentclass{article}

\PassOptionsToPackage{table}{xcolor}
\usepackage{report}

\usepackage[utf8]{inputenc} % allow utf-8 input
\usepackage[T1]{fontenc}    % use 8-bit T1 fonts
\usepackage{hyperref}       % hyperlinks
\usepackage{url}            % simple URL typesetting
\usepackage{booktabs}       % professional-quality tables
\usepackage{amsfonts}       % blackboard math symbols
\usepackage{nicefrac}       % compact symbols for 1/2, etc.
\usepackage{microtype}      % microtypography
\usepackage{xcolor}         % colors
\usepackage{graphicx}
\usepackage{amsmath}
\usepackage{subcaption}
\usepackage{amssymb}

\usepackage{amsmath}
\usepackage{amssymb}
\usepackage{mathtools}
\usepackage{amsthm}

\newtheorem{theorem}{Theorem}[section]          % 定理，按章节编号
\newtheorem{lemma}[theorem]{Lemma}              % 引理，与定理共享编号
\newtheorem{corollary}[theorem]{Corollary}      % 推论，与定理共享编号
\usepackage{algorithm}
\usepackage{algorithmic}
\usepackage{float}

\usepackage[most]{tcolorbox}
\tcbuselibrary{listings,breakable,skins}
\tcbset{
  promptbox/.style={
    enhanced,
    breakable,
    colback=gray!3,
    colframe=black!35,
    boxrule=0.6pt,
    arc=2pt,
    left=6pt,right=6pt,top=6pt,bottom=6pt,
    listing only,
    listing options={
      basicstyle=\ttfamily\footnotesize,
      breaklines=true,
      breakatwhitespace=false,
      columns=fullflexible,
      keepspaces=true,
      showstringspaces=false
    }
  }
}

\definecolor{ntuRed}{RGB}{199,16,45}
\definecolor{ntuBlue}{RGB}{0,61,126}
\definecolor{ntuGold}{RGB}{196,151,57}
\definecolor{ntuCream}{RGB}{255,250,241}

\newtcolorbox{abstractbox}{
    enhanced,
    breakable,
    colback=ntuCream,
    colframe=ntuRed,
    boxrule=0.9pt,
    arc=3mm,
    left=8pt,
    right=8pt,
    top=8pt,
    bottom=8pt
}

\usepackage{xcolor}
\usepackage{colortbl}
\usepackage{wrapfig}

\newcommand{\best}[1]{\cellcolor{ntuRed!24}{\strut \textbf{#1}}}
\newcommand{\second}[1]{\cellcolor{ntuGold!28}{\strut \textbf{#1}}}
\newcommand{\third}[1]{\cellcolor{ntuBlue!12}{\strut \textbf{#1}}}
\title{Text as Partial Constraint: Core--Residual Alignment for Robust Vision--Language Learning}

\author{%
Chengzhen Yu$^{1}$ \quad
Canran Xiao$^{2,*}$ \quad
Siyuan Ma$^{1}$ \quad
Yang Liu$^{1}$\\[4pt]
$^{1}$Nanyang Technological University (NTU)\\
$^{2}$Shenzhen Campus of Sun Yat-sen University\\
$^{*}$Corresponding author: \texttt{xiaocr3@mail.sysu.edu.cn}
}

\begin{document}

\maketitle

\begin{abstract}
\begin{abstractbox}
Vision--language alignment powers open-vocabulary recognition, retrieval, and LVLM grounding, yet natural captions are often underspecified, making similarity brittle and overly confident under paraphrase and omitted details. We aim to learn representations whose matching is stable across caption views and whose confidence reflects how strongly text constrains an image. We propose \textsc{Text as Partial Constraint} (\textsc{TPC}), a core--residual alignment framework that treats multi-view captions as incomplete supervision: it distills a consensus semantic core as the alignment target, learns a single-view core predictor for standard inference with one query, and explicitly discourages vision--language similarity from depending on the orthogonal ``unsaid'' residual. An uncertainty-aware contrastive objective further softens alignment when caption views disagree, reducing overconfident updates under weak language constraints. Across zero-shot recognition and adversarial robustness, \textsc{TPC} achieves 81.42/64.05 Top-1 clean/robust accuracy on ImageNet and 76.19/52.03 on an Avg-14 transfer suite, while improving LVLM transfer with 85.16 POPE F1 and 59.57 OKVQA accuracy under an LLaVA-1.5-7B stack. These results suggest that modeling text as a partial constraint is a practical and principled route to more reliable vision--language representations under underspecified language supervision.
\end{abstractbox}
\end{abstract}

\section{Introduction}
\label{sec:intro}
Vision--language alignment underpins open-vocabulary recognition, cross-modal retrieval, and increasingly serves as the perceptual backbone of large vision--language models (LVLMs) used for interactive reasoning and decision support \citep{pmlr-v139-radford21a,pmlr-v139-jia21b,liu2023llava,awadalla2023openflamingo}.
Its practical appeal comes from learning with abundant natural captions, yet the same supervision is inherently incomplete: users and annotators routinely omit details, rely on shared context, and phrase descriptions in diverse ways.
As these models move from curated benchmarks to real queries and safety-critical settings, failures driven by underspecified language become costly---mis-ranked retrieval, unstable predictions under paraphrase, and overconfident downstream generations.

Prior work has advanced vision--language pretraining by scaling contrastive dual encoders and improving training objectives \citep{pmlr-v139-radford21a,pmlr-v139-jia21b,zhai2023sigmoid,cherti2023reproducible,sun2023evaclip}.
Yet most methods still treat each caption/prompt as a \emph{complete} target, despite natural supervision being \emph{partial}: different textual views share a stable core but diverge in omitted or ambiguous details.
This mismatch surfaces in compositional and negation stress tests, where strong VLMs remain sensitive to small but meaning-relevant textual changes \citep{parcalabescu-etal-2022-valse,thrush_and_ross2022winoground,hsieh2023sugarcrepe,Alhamoud_2025_CVPR}, and in real under-specified user queries where adding missing constraints yields large gains \citep{choi2026underspecified}.
In parallel, robustness training mainly targets \emph{visual} shift \citep{mao2023understanding,Wang_2024_CVPR,pmlr-v235-schlarmann24a,dong2025robustsuperalignment}, while LVLM mitigation is often post hoc and does not curb representation-level over-commitment under weak text constraints \citep{li-etal-2023-evaluating,Leng_2024_CVPR}.
Thus, a key gap is the lack of a principled treatment of language supervision as \emph{incomplete} that prevents overconfident alignment to weakly supported, view-specific details.

This paper asks: \emph{Can we learn vision--language representations that align to what captions consistently specify, while avoiding over-confident commitment to what they leave underspecified?}
As illustrated in Fig.~\ref{fig:teaser}, we answer this question by reframing natural captions as partial constraints: multiple caption views reveal a shared consensus core, while their view-specific deviations expose an ``unsaid'' residual space.
Building on this observation, \textsc{TPC} aligns images to the consensus core, learns a single-caption core filter for standard inference, suppresses residual over-commitment, and calibrates confidence according to caption-view disagreement.

\begin{figure}[H]
	\centering
	\includegraphics[width=0.72\linewidth]{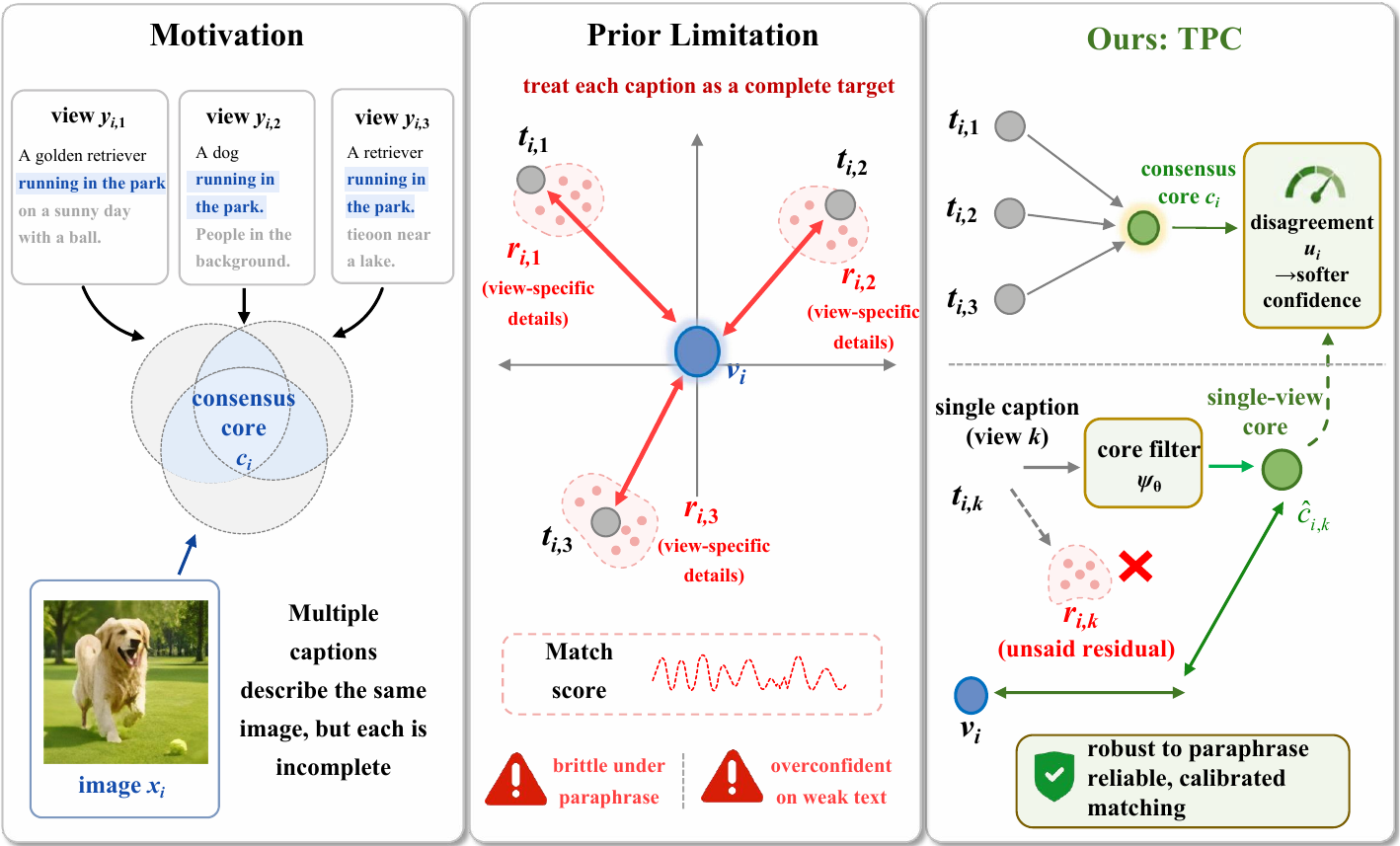}
	\caption{Unlike prior methods that over-align to each caption as a complete target, \textsc{TPC} extracts the core, suppresses residual commitment, and produces calibrated, robust vision--language alignment.}
	\label{fig:teaser}
\end{figure}

Our contributions are as follows: \textbf{\textit{(i)}} We formalize caption-view variation as a form of language underspecification and identify over-commitment to what is ``unsaid'' as a central mechanism behind brittleness and downstream hallucination.
\textbf{\textit{(ii)}} We introduce a training principle that uses multi-view language to learn alignment that prioritizes view-invariant semantics and produces confidence that tracks how strongly text constrains the instance, while remaining deployable with a single query text.
\textbf{\textit{(iii)}} Across zero-shot recognition, adversarial robustness, and LVLM transfer, the resulting representations improve both accuracy and reliability, yielding stronger robustness and reduced hallucination under underspecified language supervision.

\section{Related Work}
\label{sec:related}

\textbf{Contrastive VLP under noisy and underspecified text.}
CLIP and ALIGN showed that contrastive dual encoders trained on large-scale image--text pairs enable strong zero-shot transfer \citep{pmlr-v139-radford21a,pmlr-v139-jia21b}, and later work mostly improved results through scaling and refined objectives/recipes \citep{cherti2023reproducible,zhai2023sigmoid,sun2023evaclip}.
Yet natural captions are partial: annotators and paraphrases share core semantics but differ in omitted details, making similarity brittle under lexical/semantic edits and negation \citep{hsieh2023sugarcrepe,dumpala2024sugarcrepepp,Alhamoud_2025_CVPR} and under-specified real queries \citep{choi2026underspecified}.
Because training often treats each caption as complete (or merely adds positives), models can over-align to view-specific residuals.
\textsc{TPC} instead treats multi-caption supervision as a partial constraint, aligning to a consensus core while suppressing the ``unsaid'' residual so similarity reflects what is consistently supported.

\textbf{Robust alignment under distribution shift and attacks.}
Robust VLM training has focused on adversarial and distributional \emph{visual} shift.
Related methods include TeCoA \citep{mao2023understanding}, PMG-AFT \citep{Wang_2024_CVPR}, and TGA-ZSR \citep{yu2024tgazsr}, as well as plug-in robustness via unsupervised adversarial fine-tuning (RobustCLIP/FARE) \citep{pmlr-v235-schlarmann24a} and weak-to-strong robustness transfer (Adv-W2S) \citep{dong2025robustsuperalignment}.
While effective, they typically assume the text fully specifies the semantics and thus do not address caption-view ambiguity.
We instead treat caption variation as the shift source, model views via an instance-wise ambiguity set, and improve worst-case similarity by strengthening core alignment while reducing sensitivity to unsaid residuals, with disagreement-aware temperature scaling to curb overconfident updates.
Adjacent continual vision--language and multimodal adaptation work studies how to preserve compositional structure, prompt-invariant certificates, or concept-graph memory across task streams \citep{xiao2026reversible,zhang2026pi,zhou2026comem}.
These methods address temporal adaptation, while our focus is alignment under underspecified language supervision; the two directions are complementary.

\textbf{Grounding and hallucination in LVLMs.}
LVLMs such as LLaVA and OpenFlamingo largely inherit a frozen CLIP-like vision encoder with a lightweight connector \citep{liu2023llava,awadalla2023openflamingo}, and benchmarks like POPE highlight persistent object hallucination \citep{li-etal-2023-evaluating}.
Mitigations span decoding-time calibration (VCD) \citep{Leng_2024_CVPR}, connector-side fine-grained alignment \citep{jiang2025finegrained}, and post-hoc concept shaping (VL-SAE) \citep{shen2025vlsae}, among other LVLM alignment objectives \citep{truong2025directedtokens}.
Yet these fixes are constrained by the shared embedding space they build upon.
\textsc{TPC} improves this foundation by strengthening the alignment core and reducing residual over-commitment, serving as a drop-in vision encoder that complements decoding- and connector-level approaches.

\section{Preliminary Study}
\label{sec:preliminary_study}

Before introducing our method, we first diagnose a failure mode of frozen vision--language encoders under partial captions.
Using a frozen CLIP/OpenCLIP-style dual encoder, we evaluate images with multiple captions and ask:
\emph{when different captions describe the same image, does caption disagreement expose unstable and overconfident matching?}
No model is trained or modified in this study.
We compute four diagnostic quantities: caption-view dispersion, rank volatility across captions, residual leakage into view-specific textual components, and high-confidence retrieval error.
Full definitions and the evaluation protocol are provided in Appendix~\ref{app:prestudy_protocol}.

\begin{figure}[t]
\centering
	\includegraphics[width=0.98\linewidth]{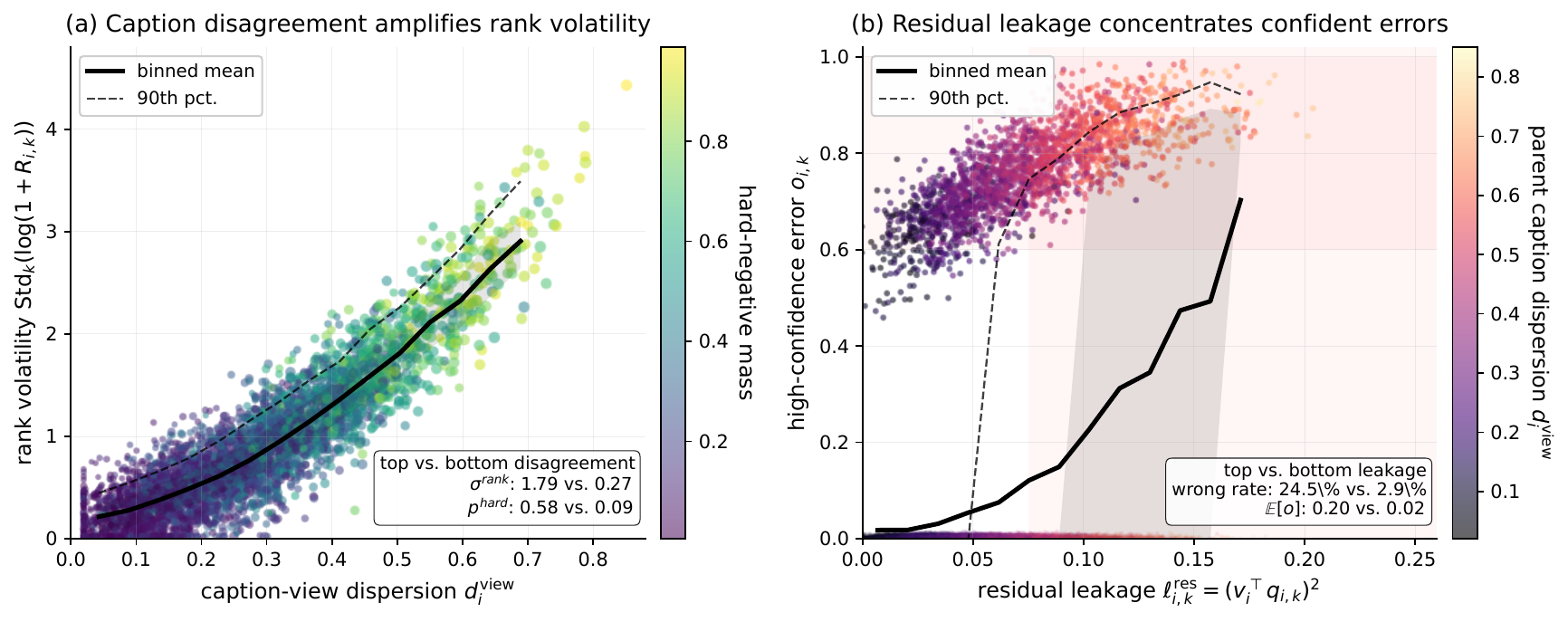}
\vspace{-0.6em}
\caption{\textbf{Frozen VLM failure under partial captions.}
Left: larger caption-view dispersion leads to higher rank volatility and stronger hard-negative confusion.
Right: residual leakage concentrates confident retrieval errors.
All statistics are computed with frozen embeddings; no proposed module or training loss is used.}
\label{fig:prestudy_failure_mechanism}
\vspace{-0.6em}
\end{figure}

\textbf{Finding 1: caption disagreement predicts unstable matching.}
Fig.~\ref{fig:prestudy_failure_mechanism}(a) shows that images with larger caption-view dispersion exhibit substantially higher rank volatility across captions.
The same image can be retrieved reliably with one valid caption but poorly with another, even though both captions refer to the same visual instance.
The hard-negative posterior mass also increases with disagreement, suggesting that underspecified captions make semantically nearby negatives increasingly competitive.

\textbf{Finding 2: view-specific residuals drive confident errors.}
Fig.~\ref{fig:prestudy_failure_mechanism}(b) shows that high-confidence retrieval errors concentrate in examples with large residual leakage.
This indicates that frozen VLM embeddings can over-commit to caption-specific components that are not consistently supported across views.
Such over-commitment is harmful not only because it changes the ranking, but also because the model often remains highly confident when it is wrong.

\textbf{Implication.}
These observations suggest that robust alignment under natural captions requires three ingredients:
(i) a stable semantic target shared across caption views,
(ii) a deployable mechanism that can recover this target from a single caption at inference time,
and (iii) an explicit way to prevent view-specific residuals from dominating similarity and confidence.
We now instantiate these requirements in a core--residual alignment framework.

\section{Method}
\label{sec:method}

\begin{figure}[htb]
	\centering
	\includegraphics[width=\linewidth]{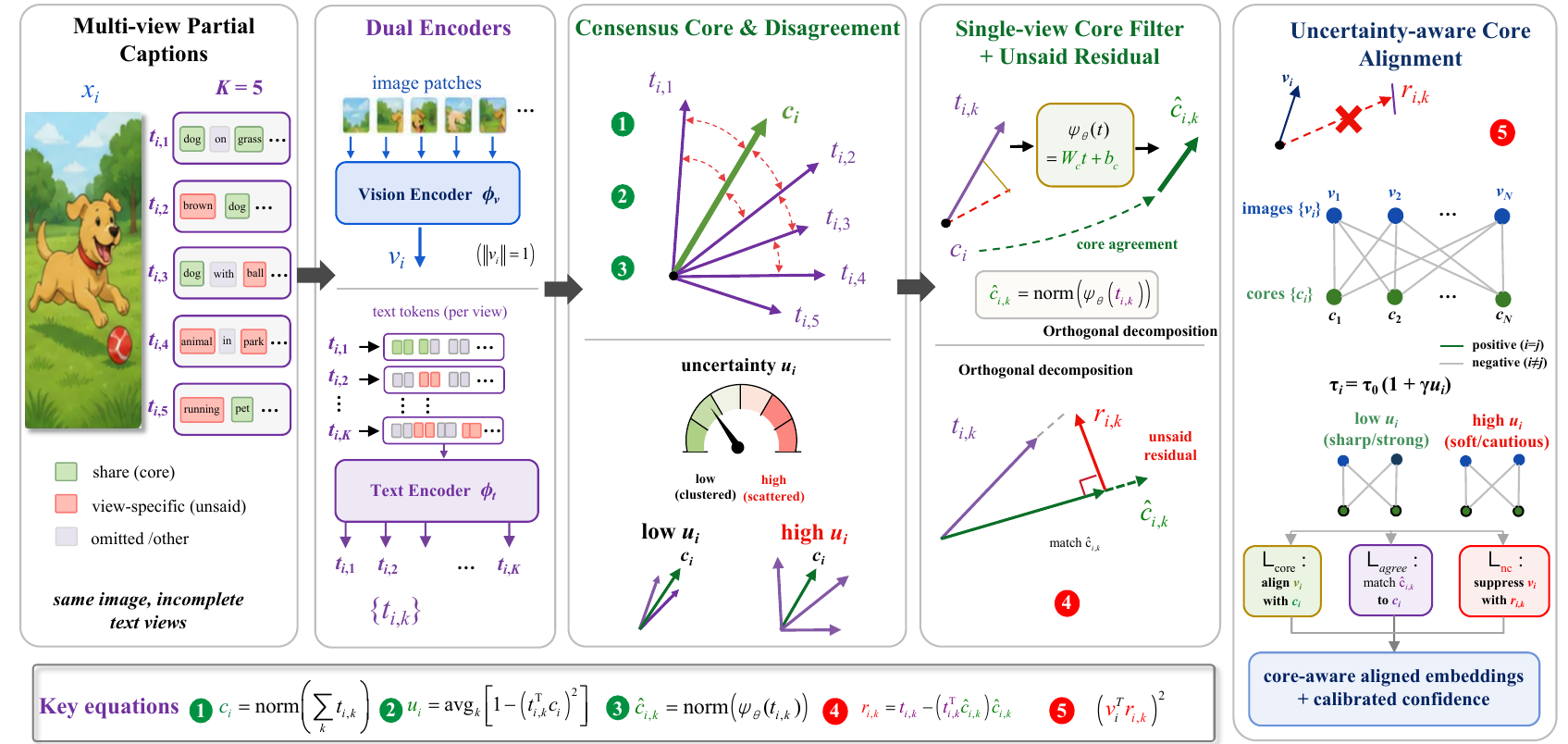}
	\caption{\textbf{Overview of \textsc{TPC}.}
\textsc{TPC} treats captions as partial constraints: it distills a multi-view consensus core, learns a single-view core predictor, decomposes each caption into core and unsaid residual components, and performs uncertainty-aware alignment that suppresses residual over-commitment.}
\label{fig:pipeline}
	\label{fig:pip}
\end{figure}

Motivated by the observation that multi-view captions share a stable core but differ in ``unsaid'' details that induce brittleness, we formulate \textsc{Text as Partial Constraint} as a minimal core--residual alignment framework, as shown in Fig.\ref{fig:pip}.
Given $K$ views, we align to a consensus core, learn a single-view core filter for one-text inference, and treat the orthogonal component as an unsaid residual that is discouraged from affecting similarity.
An uncertainty-aware contrastive objective with a non-commitment regularizer jointly optimizes these parts to improve robustness under underspecified language supervision.

\textbf{Setup.}
Different textual views of the same image agree on the main semantics but diverge on omitted or ambiguous details.
A robust aligner should match vision to the shared semantics and avoid committing to view-specific details that are not consistently supported.

In each minibatch, we sample $B$ instances $\{(x_i,\{y_{i,k}\}_{k=1}^K)\}_{i=1}^B$.
Let $\phi_v$ and $\phi_t$ be a vision encoder and a text encoder that map inputs to $\mathbb{R}^d$.
We use $\mathrm{norm}(z)\triangleq z/(\|z\|_2+\varepsilon)$ with $\varepsilon>0$.

\begin{equation}
v_i \triangleq \mathrm{norm}(\phi_v(x_i)) \in \mathbb{R}^d,
\qquad
t_{i,k} \triangleq \mathrm{norm}(\phi_t(y_{i,k})) \in \mathbb{R}^d .
\label{eq:embeddings}
\end{equation}
Here $v_i$ is the normalized image embedding, $t_{i,k}$ is the normalized embedding of the $k$-th text view for image $i$, and $d$ is the embedding dimension.

\subsection{Multi-View Consensus Core and Disagreement}
The intersection of semantics across views provides the most reliable supervision.
Disagreement across views signals underspecification and omissions, which should reduce confidence rather than sharpen alignment.

We define the consensus core as the normalized mean direction of multi-view text embeddings:
\begin{equation}
c_i \triangleq \mathrm{norm}\!\Big(\sum_{k=1}^{K} t_{i,k}\Big)\in\mathbb{R}^d .
\label{eq:consensus}
\end{equation}
The vector $c_i$ summarizes the shared semantics among $\{t_{i,k}\}$ and serves as the text-side target for alignment.

We quantify disagreement-based uncertainty by measuring the average squared cosine deviation from the consensus:
\begin{equation}
u_i \triangleq \frac{1}{K}\sum_{k=1}^{K}\Big(1-(t_{i,k}^{\top}c_i)^2\Big)\in[0,1].
\label{eq:uncertainty}
\end{equation}
Since $\|t_{i,k}\|_2=\|c_i\|_2=1$, each $(t_{i,k}^{\top}c_i)^2\in[0,1]$ and thus $u_i$ is nonnegative and bounded.
Large $u_i$ indicates that views scatter around $c_i$ and the language constraint is weak.

\subsection{Learning a Single-View Core Filter}
Benchmarks and applications provide a single query text at inference time.
We therefore learn a deterministic map that extracts the core from a single text embedding, using the multi-view consensus $c_i$ as supervision.

We introduce a lightweight core filter $\psi_\theta:\mathbb{R}^d\!\to\!\mathbb{R}^d$ implemented as a linear layer.
For each view embedding $t_{i,k}$, we predict a single-view core direction
\begin{equation}
\hat c_{i,k} \triangleq \mathrm{norm}\!\big(\psi_\theta(t_{i,k})\big)
= \mathrm{norm}\!\big(W_c t_{i,k}+b_c\big)\in\mathbb{R}^d,
\label{eq:core_filter}
\end{equation}
where $W_c\in\mathbb{R}^{d\times d}$ and $b_c\in\mathbb{R}^{d}$ are trainable parameters.
The agreement objective (defined below) enforces $\hat c_{i,k}$ to match the consensus $c_i$, making the mapping usable with a single text at test time.

\subsection{Unsaid Residual and Non-Commitment}
Even when the core is correct, the remaining components of language contain view-specific details and ambiguity.
To prevent over-alignment, we explicitly discourage the image embedding from aligning with these unsaid components.

Given the predicted core direction $\hat c_{i,k}$, we define the \emph{unsaid residual} as the component of $t_{i,k}$ orthogonal to $\hat c_{i,k}$:
\begin{equation}
r_{i,k} \triangleq t_{i,k} - (t_{i,k}^{\top}\hat c_{i,k})\,\hat c_{i,k}\in\mathbb{R}^d .
\label{eq:residual}
\end{equation}
Because $\|\hat c_{i,k}\|_2=1$, Eq.~\eqref{eq:residual} removes the projection of $t_{i,k}$ onto $\hat c_{i,k}$, yielding $\hat c_{i,k}^{\top}r_{i,k}=0$.

We enforce \emph{non-commitment} by penalizing correlation between image embeddings and residuals:
\begin{equation}
\mathcal{L}_{\mathrm{nc}}
\triangleq
\frac{1}{BK}\sum_{i=1}^{B}\sum_{k=1}^{K}
\big(v_i^{\top} r_{i,k}\big)^2 .
\label{eq:noncommit}
\end{equation}
Here $v_i^{\top}r_{i,k}$ is a scalar measuring image--residual alignment; squaring yields a smooth penalty that drives $v_i$ away from unsaid directions.

\subsection{Uncertainty-Aware Core Alignment}
When language is underspecified (large $u_i$), alignment should be less brittle and similarity should be less overconfident.
We implement this by scaling the contrastive sharpness using the disagreement-based uncertainty.

We set an instance-dependent temperature
$\tau_i \triangleq \tau_0(1+\gamma u_i)$ with $\tau_0>0$ and $\gamma\ge 0$.
We then align images to consensus cores using a symmetric InfoNCE loss:
\begin{equation}
\begin{aligned}
\mathcal{L}_{\mathrm{core}}
\triangleq & -\frac{1}{B}\sum_{i=1}^{B}
\log\frac{\exp\!\big(v_i^{\top}c_i/\tau_i\big)}{\sum_{j=1}^{B}\exp\!\big(v_i^{\top}c_j/\tau_i\big)} \\
& -\frac{1}{B}\sum_{i=1}^{B}
\log\frac{\exp\!\big(c_i^{\top}v_i/\tau_i\big)}{\sum_{j=1}^{B}\exp\!\big(c_i^{\top}v_j/\tau_i\big)} .
\label{eq:core_infonce}
\end{aligned}
\end{equation}
In Eq.~\eqref{eq:core_infonce}, negatives are other batch elements.
A larger $u_i$ increases $\tau_i$ and smooths the softmax distribution, reducing overconfident gradients when text constraints are weak.

\subsection{Training Objective}
We require the single-view core filter to reproduce the multi-view consensus and require vision to align to the consensus while ignoring residuals.

We train the core filter with a squared agreement loss
\begin{equation}
\mathcal{L}_{\mathrm{agree}}
\triangleq
\frac{1}{BK}\sum_{i=1}^{B}\sum_{k=1}^{K}
\|\hat c_{i,k}-c_i\|_2^2 .
\label{eq:agree}
\end{equation}
Eq.~\eqref{eq:agree} uses the consensus $c_i$ from Eq.~\eqref{eq:consensus} as a target, making $\hat c_{i,k}$ a single-view estimator of the shared semantics.

The full objective is
\begin{equation}
\mathcal{L}
\triangleq
\mathcal{L}_{\mathrm{core}}
+\lambda_{\mathrm{agree}}\mathcal{L}_{\mathrm{agree}}
+\lambda_{\mathrm{nc}}\mathcal{L}_{\mathrm{nc}},
\label{eq:total}
\end{equation}
with fixed weights $\lambda_{\mathrm{agree}}\ge 0$ and $\lambda_{\mathrm{nc}}\ge 0$.
We optimize parameters of $\phi_v$, $\phi_t$, and $\psi_\theta$ by minimizing Eq.~\eqref{eq:total}.

\subsection{Inference and Calibrated Confidence}
Standard retrieval uses a single text query and requires a single embedding.
We output a core embedding for ranking and a calibrated confidence for downstream decision-making.

Given a query text $y$, we compute $t=\mathrm{norm}(\phi_t(y))$ and $\hat c=\mathrm{norm}(\psi_\theta(t))$.
We rank candidates with the core similarity $s(x,y)=v^{\top}\hat c$.
We compute a scalar confidence temperature using the residual energy
$\hat u(y)\!=\!1-(t^{\top}\hat c)^2$ and $\tau(y)\!=\!\tau_0(1+\gamma \hat u(y))$,
then convert similarities into calibrated probabilities for downstream selection via
$p(j\mid y)=\mathrm{softmax}\!\big(s(x_j,y)/\tau(y)\big)$.
This calibration changes the sharpness of the retrieval distribution while preserving the ranking induced by $s(x,y)$.

\section{Theory}
\label{sec:theory}

We provide two theoretical justifications for \textsc{TPC}: caption-view variation induces a worst-case residual penalty, and multi-view consensus denoises incomplete textual supervision.

\subsection{Robustness to Caption-View Shift as Distributional Robust Optimization}
\label{sec:theory_dro}

Let $c_i$ be the consensus core of instance $i$ and let
$\mathcal{U}_i=\mathrm{span}\{t_{i,k}-(t_{i,k}^{\top}c_i)c_i:k\in[K]\}\subseteq c_i^\perp$
denote the subspace of caption-specific deviations.
With residual radius
$\rho_i=\max_k\|t_{i,k}-(t_{i,k}^{\top}c_i)c_i\|_2$,
we model admissible caption views by the spherical-cap ambiguity set
\begin{equation}
\mathcal{A}_i(\rho_i)\triangleq
\Bigl\{\,t\in\mathbb{R}^d:\ 
\begin{multlined}[t]
\|t\|_2=1,\ t=\sqrt{1-\|r\|_2^2}\,c_i+r,\\
r\in\mathcal{U}_i,\ \|r\|_2\le \rho_i\,\Bigr\}.
\end{multlined}
\label{eq:ambiguity_set}
\end{equation}
This set contains all observed caption views under the mild hemisphere condition and its radius is controlled by the disagreement statistic $u_i$; see Lemma~\ref{lem:contain_radius}.

\begin{theorem}[Tight invariance bound under caption-view shift]
\label{thm:robust_similarity}
Fix $c\in\mathbb{R}^d$ with $\|c\|_2=1$, a subspace $\mathcal{U}\subseteq c^\perp$, and $\rho\in[0,1)$.
Let $\mathcal{A}(c,\mathcal{U},\rho)$ be defined as in Eq.~\eqref{eq:ambiguity_set} by replacing $(c_i,\mathcal{U}_i,\rho_i)$ with $(c,\mathcal{U},\rho)$.
For any unit vector $v$, define
\begin{equation}
a\triangleq v^\top c,\qquad b\triangleq \|\mathbf{P}_{\mathcal{U}}v\|_2 .
\end{equation}
Then
\begin{equation}
\inf_{t\in\mathcal{A}(c,\mathcal{U},\rho)} v^\top t
=
a\sqrt{1-\rho^2}-\rho b .
\label{eq:worst_case_similarity}
\end{equation}
Consequently, for any monotone non-increasing loss $\ell$,
\begin{equation}
\sup_{t\in\mathcal{A}(c,\mathcal{U},\rho)}
\ell(v^\top t)
=
\ell\!\left(a\sqrt{1-\rho^2}-\rho b\right).
\label{eq:robust_loss_exact}
\end{equation}
\end{theorem}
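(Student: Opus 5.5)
The plan is to reduce the infimum over the ambiguity set to a one-dimensional problem in the residual norm. Every $t\in\mathcal{A}(c,\mathcal{U},\rho)$ has the form $t=\sqrt{1-s^2}\,c+r$ with $r\in\mathcal{U}$ and $s=\|r\|_2\in[0,\rho]$. Because $\mathcal{U}\subseteq c^\perp$, the unit-norm constraint is automatic. Since $r\in\mathcal{U}$, we have $v^\top r=(\mathbf{P}_{\mathcal{U}}v)^\top r$. Hence
\[
v^\top t=a\sqrt{1-s^2}+(\mathbf{P}_{\mathcal{U}}v)^\top r .
\]

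\textbf{Step 1 (inner minimisation over the direction of $r$).} For fixed $s$, Cauchy--Schwarz gives $(\mathbf{P}_{\mathcal{U}}v)^\top r\ge -bs$. If $b>0$, equality holds at $r=-s\,\mathbf{P}_{\mathcal{U}}v/b\in\mathcal{U}$. If $b=0$, any $r\in\mathcal{U}$ with $\|r\|_2=s$ attains it, which requires $\mathcal{U}\neq\{0\}$ whenever $s>0$. This leaves
\[
f(s)=a\sqrt{1-s^2}-bs,\qquad s\in[0,\rho].
\]

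\textbf{Step 2 (outer minimisation over $s$).} This is the step I expect to be the real obstacle, because the claimed value $f(\rho)$ need not be the minimum in general. When $a\ge 0$, $s\mapsto a\sqrt{1-s^2}$ is concave and $-bs$ is linear, so $f$ is concave on $[0,\rho]$ and its minimum is at an endpoint. Since $f(\rho)=a\sqrt{1-\rho^2}-\rho b\le a=f(0)$, the minimum is $f(\rho)$, which gives Eq.~\eqref{eq:worst_case_similarity}; the infimum is attained because the set is compact.

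When $a<0$ the claim can fail. For example, with $a=-1$ and $b=0$ we get $f(0)=-1<-\sqrt{1-\rho^2}=f(\rho)$. Likewise, if $\mathcal{U}=\{0\}$ and $\rho>0$, the set is $\{c\}$, the infimum is $a$, and this differs from $a\sqrt{1-\rho^2}$. So I would state the result under two conditions: the hemisphere condition $a=v^\top c\ge 0$ (the regime of interest, consistent with the hemisphere condition mentioned before Lemma~\ref{lem:contain_radius}), and either $\mathcal{U}\neq\{0\}$ or $\rho=0$. In the instance-wise setting the second condition is automatic, since $\rho_i>0$ forces some nonzero residual in $\mathcal{U}_i$. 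For general $a$, the correct value is $\min\{f(0),f(\rho)\}$ together with any interior critical point of $f$, but I would not pursue this.

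\textbf{Step 3 (robust loss).} Let $t^\star$ attain the infimum. For a non-increasing $\ell$, every $t$ in the set satisfies $v^\top t\ge v^\top t^\star$, so $\ell(v^\top t)\le \ell(v^\top t^\star)$. The supremum of $\ell(v^\top t)$ is therefore attained at $t^\star$ and equals $\ell\!\left(a\sqrt{1-\rho^2}-\rho b\right)$, which is Eq.~\eqref{eq:robust_loss_exact}. This step needs no continuity of $\ell$, because the infimum is attained rather than merely approached.
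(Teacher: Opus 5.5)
Your proof is correct and follows the paper's argument: you reduce to the residual variable, apply Cauchy--Schwarz over the direction of $r$, minimize $f(s)=a\sqrt{1-s^2}-bs$ over $[0,\rho]$, and then use the monotonicity of $\ell$; the only difference is that you use concavity plus $f(\rho)\le f(0)$ where the paper uses the sign of $f'(s)$. Your extra hypotheses are justified, since the paper's appendix proof also assumes $a\ge 0$ (missing from the main-text statement), and its tightness witness $r^\star=0$ for $b=0$ has norm $0$ rather than $\rho$, so the case $\mathcal{U}=\{0\}$, $\rho>0$, $a>0$ is a real exception that your condition correctly excludes.
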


Theorem~\ref{thm:robust_similarity} shows that robustness improves by increasing core alignment $a=v^\top c$ and decreasing residual sensitivity $b=\|\mathbf{P}_{\mathcal{U}}v\|_2$.
This directly motivates aligning image embeddings to consensus cores while suppressing their correlation with unsaid residual directions.
Since Lemma~\ref{lem:contain_radius} gives $\rho_i^2\le K u_i$, the same analysis also motivates using caption-view disagreement to modulate confidence.
The full proof is in Appendix~\ref{app:theory_dro}.

\subsection{Generalization via Multi-View Denoising}
\label{sec:theory_gen}

We further view multiple captions as noisy observations of a latent semantic core.
For each instance $i$, assume
\begin{equation}
t_{i,k}=\mu_i+\varepsilon_{i,k},\qquad
\mathbb{E}[\varepsilon_{i,k}\mid \mu_i]=0,\qquad
\|\varepsilon_{i,k}\|_2\le \sigma<1,
\label{eq:view_noise_model}
\end{equation}
where $\mu_i\in\mathbb{S}^{d-1}$ is the latent core.
Let $c_i=\mathrm{norm}(\frac{1}{K}\sum_k t_{i,k})$ be the normalized consensus.
For a predictor class $\mathcal{F}$ and an $L$-Lipschitz loss, define
\[
R_\mu(f)=\mathbb{E}\big[\ell(f(x)^\top\mu)\big],
\qquad
R_c(f)=\mathbb{E}\big[\ell(f(x)^\top c)\big].
\]

\begin{theorem}[PAC bound with $K$-view denoising]
\label{thm:pac_denoise}
Let $\widehat f\in\arg\min_{f\in\mathcal{F}}\widehat R_c(f)$, where
$\widehat R_c(f)=\frac{1}{n}\sum_{i=1}^{n}\ell(f(x_i)^\top c_i)$.
Then, with probability at least $1-\delta$,
\begin{equation}
\begin{split}
R_\mu(\widehat f)
&\le\inf_{f\in\mathcal{F}} R_\mu(f)
+ 4L\,\mathfrak{R}_n(\mathcal{F}) \\
&\quad+ 2\sqrt{\frac{\log(4/\delta)}{2n}} 
+ 2L\,\varepsilon_{K,n}(\delta),
\end{split}
\label{eq:pac_denoise_bound}
\end{equation}
where
\begin{align}
\varepsilon_{K,n}(\delta) &\triangleq \frac{2\eta_{K,n}(\delta)}{1-\eta_{K,n}(\delta)}, &
\eta_{K,n}(\delta) &\triangleq \frac{\sigma}{\sqrt{K}} + \sigma\sqrt{\frac{2\log(2n/\delta)}{K}}.
\label{eq:eps_kn}
\end{align}
In particular, when $\eta_{K,n}(\delta)\le \tfrac{1}{2}$,
$\varepsilon_{K,n}(\delta)=O(\sigma\sqrt{\log(n/\delta)/K})$.
\end{theorem}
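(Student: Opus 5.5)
The plan is a three-part decomposition: an estimation error for the consensus-trained risk $R_c$, a uniform deviation bound between $R_c$ and $R_\mu$ coming from consensus concentration, and a union bound that splits the failure probability $\delta$ between the two. Concretely, for any $f\in\mathcal{F}$ I will write
\[
R_\mu(\widehat f)-R_\mu(f)
=\big[R_\mu(\widehat f)-R_c(\widehat f)\big]
+\big[R_c(\widehat f)-\widehat R_c(\widehat f)\big]
+\big[\widehat R_c(\widehat f)-\widehat R_c(f)\big]
+\big[\widehat R_c(f)-R_c(f)\big]
+\big[R_c(f)-R_\mu(f)\big].
\]
The middle term $\widehat R_c(\widehat f)-\widehat R_c(f)$ is $\le 0$ by the definition of $\widehat f$. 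The second and fourth terms are handled by the standard uniform convergence bound for the loss class $\{(x,c)\mapsto \ell(f(x)^\top c)\}$. I assume $\ell$ takes values in $[0,1]$, which the constant in the $\sqrt{\log/n}$ term requires. By symmetrization and Talagrand's contraction lemma (the map $f\mapsto f(x)^\top c$ with $\|c\|_2=1$ is handled as a vector contraction), each of these terms is at most $2L\,\mathfrak{R}_n(\mathcal{F})+\sqrt{\log(4/\delta)/(2n)}$ with probability $1-\delta/4$. Summing the two gives the $4L\mathfrak{R}_n$ and $2\sqrt{\cdot}$ terms, with total failure probability $\delta/2$.

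The first and last terms are the denoising terms. I will bound each by $L\,\mathbb{E}|f(x)^\top(c-\mu)|\le L\,\sup\|c-\mu\|_2$, using Lipschitzness and taking $\|f(x)\|_2\le1$ (normalized embeddings). On a high-probability event this gives $L\varepsilon_{K,n}(\delta)$ per term, hence $2L\varepsilon_{K,n}$ in total.

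The key step is controlling $\|c_i-\mu_i\|_2$. Let $\bar\varepsilon_i=\frac1K\sum_k\varepsilon_{i,k}$. I will apply a vector-valued bounded-differences (or Pinelis) inequality to this mean of bounded, conditionally mean-zero vectors; this requires independence of views given $\mu_i$, which I will state as implicit in the model. Conditionally, $\mathbb{E}\|\bar\varepsilon_i\|_2\le\sigma/\sqrt K$ by Jensen, and McDiarmid with differences $2\sigma/K$ gives
\[
\|\bar\varepsilon_i\|_2\le \sigma/\sqrt K+\sigma\sqrt{2\log(2n/\delta)/K}=\eta_{K,n}(\delta)
\]
with probability $1-\delta/(2n)$. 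A union bound over the $n$ instances uses the remaining $\delta/2$. The deterministic step then converts this into a bound on the normalized vector. Writing $m=\mu+\bar\varepsilon$ with $\|\mu\|_2=1$, we have $\|m\|_2\ge1-\eta$, and
\[
\|m/\|m\|_2-\mu\|_2\le \frac{\|m-\mu\|_2+\big|\|m\|_2-1\big|}{\|m\|_2}\le\frac{2\eta}{1-\eta}=\varepsilon_{K,n}.
\]
The small-$\eta$ statement follows immediately: for $\eta\le\tfrac12$, $\varepsilon\le4\eta$.

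The main obstacle is the mismatch between the high-probability event, which controls the sample points, and the population risks $R_c,R_\mu$, which are expectations over fresh draws. I will handle it the way the stated bound suggests. I interpret the result as holding on the event where the consensus error bound holds for every draw. For the population terms this comes from the almost-sure bounded-noise tail applied in expectation: I will bound $\mathbb{E}\|c-\mu\|_2$ by integrating the tail, or, more simply, by noting that the pointwise bound with $n=1$ is already dominated by $\eta_{K,n}$. Alternatively, I can route the comparison entirely through empirical quantities, comparing $\widehat R_c$ with $\widehat R_\mu$ on the sample and then applying uniform convergence to $R_\mu$ directly, so that only the sample-level event is needed. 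I will adopt this second routing if the first proves delicate, since it fits the constants $2L\varepsilon_{K,n}$ and the $\log(4/\delta)$ allocation exactly.
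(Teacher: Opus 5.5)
Your argument is correct via your second routing, and its skeleton is the paper's. The five-term chain is \eqref{eq:chain1}--\eqref{eq:chain5}, except that you use an arbitrary comparator $f$ instead of $f^\star$, which also avoids assuming $\inf_{\mathcal{F}}R_\mu$ is attained. Your consensus step (Jensen, McDiarmid with differences $2\sigma/K$ at level $\delta/(2n)$, a union bound costing $\delta/2$, then $\|c_i-\mu_i\|_2\le 2\eta/(1-\eta)$) is the paper's Steps 1--2.

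The genuine difference is the obstacle you flag. The paper's Step 3 asserts that ``taking expectation over fresh draws'' turns the sample event $\max_{i\le n}\|c_i-\mu_i\|_2\le\varepsilon_{K,n}(\delta)$ into \eqref{eq:population_transfer}. This does not follow, because that event says nothing about a fresh instance. Your second routing repairs it. You bound $|\widehat R_\mu(f)-\widehat R_c(f)|\le L\max_i\|c_i-\mu_i\|_2$ on the sample, then apply uniform convergence for $R_\mu$ on the i.i.d.\ clean sample $\{(x_i,\mu_i)\}$; uniformity covers the data-dependent $\widehat f$. With failure budget $\delta/4+\delta/4+\delta/2$ this reproduces the stated constants exactly, so make it your primary argument.

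Three small bookkeeping points:
\begin{itemize}
\item The complexity that appears is the paper's $\mathfrak{R}_n(\mathcal{F})$ with $\mu_i$ in place of $c_i$. This is harmless if $\mathfrak{R}_n$ is defined with a supremum over unit targets.
\item Use scalar Ledoux--Talagrand contraction on $f(x)^\top\mu$, since a vector contraction costs an extra factor $\sqrt2$.
\item Your separate $\delta/4$ for each direction also fixes a slip in the paper, which uses the one-sided bound \eqref{eq:rad_gen2} in the reverse direction at \eqref{eq:chain4}.
\end{itemize}

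Your first routing, as written, does not close. The ``$n=1$'' bound for a fresh draw holds only with probability $1-\delta'$, which leaves an additive failure term. Integrating the tail through the convex map $x\mapsto 2x/(1-x)$ also does not return $\varepsilon_{K,n}(\delta)$.

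The first routing can be rescued by a linear normalization estimate:
\begin{itemize}
\item $\mu^\top(\mu+e)\ge 1-\|e\|_2>0$, so the angle $\theta$ between $\mu+e$ and $\mu$ lies in $[0,\pi/2)$.
\item $\sin\theta\le\|e\|_2$, since $\sin\theta$ is the distance from $\mu$ to the line through $\mu+e$.
\item Hence $\|c-\mu\|_2=2\sin(\theta/2)\le\sqrt2\sin\theta\le\sqrt2\|e\|_2$.
\end{itemize}
It follows that $|R_\mu(f)-R_c(f)|\le L\,\mathbb{E}\|c-\mu\|_2\le\sqrt2 L\sigma/\sqrt K\le L\varepsilon_{K,n}(\delta)$ for every $f$, with no event at all. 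This route dispenses with McDiarmid and the union bound over $n$. It also shows that the $\sqrt{\log(n/\delta)}$ factor in the denoising term is an artifact of controlling all $n$ sample cores simultaneously.

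Either way, the statement tacitly needs $\eta_{K,n}(\delta)<1$ for $\varepsilon_{K,n}(\delta)$ to be meaningful.
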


\begin{corollary}[Sample complexity in $K$]
\label{cor:k_sample_complexity}
Under the conditions of Theorem~\ref{thm:pac_denoise}, to make the denoising contribution at most $\epsilon$, it suffices that
\begin{equation}
K=\Omega\!\left(\frac{\sigma^2\log(n/\delta)}{\epsilon^2}\right).
\end{equation}
\end{corollary}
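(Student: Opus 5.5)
The corollary follows by inverting the explicit form of the denoising term $2L\,\varepsilon_{K,n}(\delta)$ in Theorem~\ref{thm:pac_denoise}. No new probabilistic argument is needed. I treat $L$ as a constant absorbed into the $\Omega(\cdot)$, or equivalently read ``denoising contribution'' as $\varepsilon_{K,n}(\delta)$ itself, rescaling $\epsilon$ by $2L$ otherwise. It then suffices to give a lower bound on $K$ that forces $\varepsilon_{K,n}(\delta)\le\epsilon$.

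\textbf{Step 1: reduce to controlling $\eta_{K,n}(\delta)$.}
The map $\eta\mapsto 2\eta/(1-\eta)$ is increasing on $[0,1)$. Hence $\varepsilon_{K,n}(\delta)\le\epsilon$ holds exactly when
\[
\eta_{K,n}(\delta)\le \eta^\star\triangleq \frac{\epsilon}{2+\epsilon}.
\]
When $\epsilon\le 1$ we have $\eta^\star\ge\epsilon/3$, so it is enough to ensure $\eta_{K,n}(\delta)\le\epsilon/3$. This also gives $\eta\le 1/2$, which puts us in the regime named at the end of Theorem~\ref{thm:pac_denoise}. For $\epsilon>1$, the target $\eta\le 1/3$ suffices, and the requirement only becomes weaker.

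\textbf{Step 2: solve for $K$.}
Factor the defining expression:
\[
\eta_{K,n}(\delta)=\frac{\sigma}{\sqrt K}\Bigl(1+\sqrt{2\log(2n/\delta)}\Bigr).
\]
Assuming $n\ge1$ and $\delta<1$, we have $\log(2n/\delta)\ge\log 2$, so $1+\sqrt{2\log(2n/\delta)}\le C\sqrt{\log(2n/\delta)}$ for an absolute constant $C$. Therefore
\[
K\ \ge\ \frac{9C^2\sigma^2\log(2n/\delta)}{\epsilon^2}
\]
implies $\eta_{K,n}(\delta)\le\epsilon/3$, and hence $\varepsilon_{K,n}(\delta)\le\epsilon$. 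Finally, $\log(2n/\delta)\le 2\log(n/\delta)$ whenever $n/\delta\ge2$. Up to constants, then, $K=\Omega(\sigma^2\log(n/\delta)/\epsilon^2)$ suffices. Multiplying back by $2L$ changes only the constant, or replaces $\epsilon$ by $\epsilon/(2L)$.

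\textbf{Main obstacle.}
The only delicate points are bookkeeping, not analysis. First, the $1/(1-\eta)$ blow-up must be handled by restricting to $\eta\le 1/2$ (or $\eta\le 1/3$) so that it contributes only a constant factor. Second, the additive $\sigma/\sqrt K$ term must be absorbed into the logarithmic one, using a lower bound on $\log(2n/\delta)$. I would state these mild conditions ($\epsilon\le1$, $n/\delta\ge 2$) explicitly as the regime in which the $\Omega(\cdot)$ is meant.
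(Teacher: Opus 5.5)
Your proposal is correct and follows the same route as the paper's proof: you reduce $2L\,\varepsilon_{K,n}(\delta)\le\epsilon$ to a bound on $\eta_{K,n}(\delta)$, then solve the definition of $\eta_{K,n}(\delta)$ for $K$, absorbing $L$ and absolute constants into $\Omega(\cdot)$. The differences are only bookkeeping: you invert $2\eta/(1-\eta)\le\epsilon$ exactly to get $\eta\le\epsilon/(2+\epsilon)$, where the paper uses $\varepsilon_{K,n}(\delta)\le 4\eta_{K,n}(\delta)$ for $\eta\le 1/2$, and you state explicitly the regime conditions (small $\epsilon$, $n/\delta\ge 2$) that the paper leaves implicit in its $\Omega(\cdot)$.
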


Theorem~\ref{thm:pac_denoise} and Corollary~\ref{cor:k_sample_complexity} show that the consensus target becomes statistically cleaner as the number of views increases.
This supports using multi-view consensus for training while learning a single-view core estimator for deployment.
The full proof is in Appendix~\ref{app:theory_gen}.
\section{Experiments}
\label{sec:exp}

% =========================
% Main paper (concise)
% =========================
\subsection{Experimental Setup}
\label{sec:exp_setup}

\textbf{Datasets.}
We evaluate \textsc{Text as Partial Constraint} on three complementary suites.
(i) \emph{Image--text retrieval}: MS-COCO and Flickr30K, which provide multiple captions per image and are standard for alignment evaluation.
(ii) \emph{Zero-shot recognition}: ImageNet-1K and a broad transfer suite covering natural, fine-grained, texture, remote sensing, medical, and sketch/rendition shifts.
(iii) \emph{Robustness to underspecified language}: datasets with multi-caption annotations (COCO/Flickr30K) where we can evaluate sensitivity to caption choice.
We additionally report LVLM-side results by swapping in our vision encoder for fixed LVLM frameworks.

\textbf{Evaluation metrics.}
For retrieval, we report Recall@\{1,5,10\} for both image-to-text and text-to-image.
For zero-shot recognition, we report Top-1 accuracy and, when applicable, robust accuracy under standardized adversarial evaluation.
To assess overconfidence under weak language constraints, we report calibration metrics (ECE and NLL) on both classification probabilities and retrieval softmax scores (Appendix~\S\ref{app:exp_metrics}).

\textbf{Compared methods.}
We compare \textsc{Text as Partial Constraint} to:
(i) \emph{contrastive dual-encoder baselines} and strong recent training recipes (CLIP~\citep{pmlr-v139-radford21a}, OpenCLIP~\citep{cherti2023reproducible}, SigLIP~\citep{zhai2023sigmoid}, EVA-CLIP~\citep{sun2023evaclip});
(ii) \emph{robust/anti-misalignment alignment} methods targeting zero-shot robustness or robustness transfer (TeCoA~\citep{mao2023understanding}, RobustCLIP/FARE~\citep{pmlr-v235-schlarmann24a}, Robust SuperAlignment (Adv-W2S)~\citep{dong2025robustsuperalignment});
(iii) \emph{post-hoc representation enhancement} at the concept level (VL-SAE~\citep{shen2025vlsae});
and (iv) \textbf{LVLM-side alignment} under a fixed LVLM stack (LLaVA~\citep{liu2023llava}, OpenFlamingo~\citep{awadalla2023openflamingo}), including patch-aligned training~\citep{jiang2025finegrained} and Directed-Tokens~\citep{truong2025directedtokens}.
We match backbone/data/steps when feasible; methods requiring adversarial generation or extra supervision are additionally reported in a separate, compute-matched block.
Refer to ~\S\ref{app:impl_details} for Implementation details.

\subsection{Main Results}
\label{sec:main_results}

\begin{table}[t]
\centering
\resizebox{\linewidth}{!}{%
\begin{tabular}{lcccc}
\toprule
Method & ImageNet (Clean$\uparrow$) & ImageNet (Robust$\uparrow$) & Avg-14 (Clean$\uparrow$) & Avg-14 (Robust$\uparrow$) \\
\midrule
Standard CLIP~\citep{pmlr-v139-radford21a}
& 74.90 & 0.00 & \second{73.08} & 0.01 \\
TeCoA~\citep{mao2023understanding}
& \third{80.00} & \second{61.74} & 61.56 & 43.26 \\
PMG-AFT~\citep{wang2024pmgaft}
& 77.84 & 60.02 & 64.46 & \third{45.74} \\
FARE~\citep{pmlr-v235-schlarmann24a}
& 72.96 & 43.56 & 65.50 & 42.97 \\
TGA-ZSR~\citep{yu2024tgazsr}
& \best{80.26} & \third{61.46} & 62.11 & 45.19 \\
Adv-W2S~\citep{dong2025robustsuperalignment}
& 75.84 & 58.30 & \third{68.75} & \second{48.38} \\
\midrule
\textsc{TPC} (Ours)
& \best{81.42} & \best{64.05} & \best{76.19} & \best{52.03} \\
\bottomrule
\end{tabular}%
}
\caption{\textbf{Zero-shot recognition and adversarial robustness.}
Top-1 clean / robust accuracy (\%) on ImageNet and the average over 14 datasets under AutoAttack ($\epsilon{=}2/255$).}
\label{tab:main_cvlm}
\end{table}

\begin{table}[t]
\centering
\setlength{\tabcolsep}{4.6pt}
\small
\caption{\textbf{Transfer to LVLMs: hallucination and VQA.}
Left: POPE hallucination F1 (\%) under three sampling schemes and their average (higher is better).
Right: VQA accuracy (\%) under an LLaVA-1.5-7B stack.}
\label{tab:main_lvlm}
\resizebox{\linewidth}{!}{%
\begin{tabular}{lcccc|cccc}
\toprule
& \multicolumn{4}{c}{\textbf{POPE Hallucination (F1$\uparrow$)}} & \multicolumn{4}{c}{\textbf{VQA Accuracy$\uparrow$}} \\
\cmidrule(lr){2-5}\cmidrule(lr){6-9}
Method & Random & Popular & Adversarial & Avg & GQA & SciQA & VizWiz & OKVQA \\
\midrule
\multicolumn{9}{l}{\textit{Decoding / concept-level enhancement (frozen LVLM)}}\\
LLaVA1.5 (Regular)~\citep{liu2023llava}
& 80.87 & 79.27 & 77.16 & 79.10 & \third{62.0} & \third{66.8} & \third{50.0} & \third{53.4} \\
VCD~\citep{leng2024vcd}
& 84.04 & 82.31 & 80.13 & 82.16 & 60.8 & 65.2 & 48.7 & 51.9 \\
VL-SAE~\citep{shen2025vlsae}
& \third{85.50} & \third{84.37} & \second{82.29} & \third{84.05} & 61.5 & 66.1 & 49.8 & 53.0 \\
\midrule
\multicolumn{9}{l}{\textit{Projector / fine-grained alignment (LVLM-side training)}}\\
Patch-Aligned Training~\citep{jiang2025finegrained}
& 81.2 & 80.5 & 78.1 & 79.93 & \second{63.0} & \second{68.7} & \second{52.3} & \second{58.3} \\
\midrule
\multicolumn{9}{l}{\textit{Robust vision encoder replacement}}\\
TeCoA~\citep{mao2023understanding}
& 79.80 & 79.10 & 75.20 & 78.00 & 60.2 & 64.5 & 47.3 & 51.8 \\
PMG-AFT~\citep{wang2024pmgaft}
& 81.70 & 80.90 & 76.30 & 79.60 & 61.0 & 65.8 & 48.2 & 52.6 \\
FARE~\citep{pmlr-v235-schlarmann24a}
& 82.20 & 81.50 & 78.60 & 80.80 & 61.8 & 66.5 & 49.5 & 54.1 \\
TGA-ZSR~\citep{yu2024tgazsr}
& 80.40 & 79.80 & 76.00 & 78.70 & 60.5 & 64.9 & 47.8 & 52.0 \\
Adv-W2S~\citep{dong2025robustsuperalignment}
& \second{85.60} & \second{84.90} & \third{81.00} & \second{83.80} & 62.5 & 67.9 & 51.4 & 56.8 \\
\midrule
\multicolumn{9}{l}{\textit{Ours (core--residual alignment)}}\\
\textsc{TPC} (Ours)
& \best{86.93} & \best{85.81} & \best{82.74} & \best{85.16}
& \best{63.38} & \best{69.12} & \best{52.94} & \best{59.57} \\
\bottomrule
\end{tabular}
} % resizebox
\end{table}

%Tables~\ref{tab:main_cvlm}--\ref{tab:main_lvlm} show that modeling text as a \emph{partial constraint} yields consistent gains.

\textbf{Zero-shot generalization \& robustness.}
In Table~\ref{tab:main_cvlm}, \textsc{TPC} is best on both clean and robust accuracy, reaching 81.42/64.05 on ImageNet and 76.19/52.03 on Avg-14 (clean/robust), outperforming the strongest robust baseline Adv-W2S by +5.58/+5.75 on ImageNet and +7.44/+3.65 on Avg-14.
This supports that isolating a shared core and suppressing residual alignment reduces brittle over-commitment.

\textbf{LVLM transfer.}
In Table~\ref{tab:main_lvlm}, using \textsc{TPC} as a drop-in vision encoder achieves the best POPE F1 across all settings (Avg 85.16), exceeding Adv-W2S by +1.36.
It also improves VQA accuracy on all benchmarks (e.g., OKVQA 59.57), surpassing Patch-Aligned Training, indicating that core--residual alignment complements projector- or decoding-level fixes.

\subsection{Ablation and Analysis}
\label{sec:ablation}

\newcommand{\dec}[1]{\textcolor{blue!70}{\scriptsize\,($\downarrow$#1)}}

\textbf{Single-factor ablations.}
Table~\ref{tab:ablation} shows that each component matters.
Removing the multi-view consensus core most harms robustness and LVLM transfer (Avg-14 robust \dec{2.30}, POPE \dec{1.74}), indicating the shared intersection is a better target than any single caption.
Uncertainty scaling mainly affects robust accuracy (ImageNet robust \dec{1.17}), while dropping core agreement yields the largest overall regression (ImageNet robust \dec{3.13}), underscoring the need for a reliable single-view core at test time.
Removing non-commitment also consistently degrades robustness and hallucination metrics (Avg-14 robust \dec{2.37}, POPE \dec{2.13}), supporting residual suppression.

\begin{table}[t]
\centering
\setlength{\tabcolsep}{6.0pt}
\small
\caption{\textbf{Single-factor ablations of \textsc{TPC}.}
We report zero-shot clean/robust accuracy (\%) and LVLM transfer (POPE Avg F1, VQA Avg).}
\label{tab:ablation}
\resizebox{\linewidth}{!}{%
\begin{tabular}{lcccccc}
\toprule
Method
& ImageNet (Clean$\uparrow$)
& ImageNet (Robust$\uparrow$)
& Avg-14 (Clean$\uparrow$)
& Avg-14 (Robust$\uparrow$)
& POPE (Avg F1$\uparrow$)
& VQA (Avg$\uparrow$) \\
\midrule
\textsc{TPC} (Full)
& \textbf{81.42} & \textbf{64.05} & \textbf{76.19} & \textbf{52.03} & \textbf{85.16} & \textbf{61.25} \\
\midrule
\quad w/o multi-view consensus core ($c_i{\leftarrow}t_{i,k}$)
& 80.61\dec{0.81} & 62.47\dec{1.58} & 75.18\dec{1.01} & 49.73\dec{2.30} & 83.42\dec{1.74} & 60.21\dec{1.04} \\
\quad w/o uncertainty scaling ($\gamma{=}0$)
& 81.07\dec{0.35} & 62.88\dec{1.17} & 75.73\dec{0.46} & 50.21\dec{1.82} & 84.11\dec{1.05} & 60.82\dec{0.43} \\
\quad w/o core agreement ($\lambda_{\mathrm{agree}}{=}0$)
& 80.03\dec{1.39} & 60.92\dec{3.13} & 74.62\dec{1.57} & 48.31\dec{3.72} & 82.96\dec{2.20} & 59.74\dec{1.51} \\
\quad w/o non-commitment ($\lambda_{\mathrm{nc}}{=}0$)
& 81.16\dec{0.26} & 62.34\dec{1.71} & 75.50\dec{0.69} & 49.66\dec{2.37} & 83.03\dec{2.13} & 60.05\dec{1.20} \\
\bottomrule
\end{tabular}%
}
\end{table}

\textbf{Hyperparameter sensitivity.}
Fig.~\ref{tab:sens} shows \textsc{TPC} is stable across a wide range of settings.
Robustness increases with the number of views and saturates around $K{\approx}4$--$5$; $\gamma$ exhibits a mild sweet spot (too small under-calibrates, too large over-smooths).
$\lambda_{\mathrm{nc}}$ primarily controls robustness/hallucination and remains stable for $\lambda_{\mathrm{nc}}\in[0.05,0.2]$, while $\lambda_{\mathrm{agree}}$ is flat around 1--2.
Varying $\tau_0$ within the standard CLIP range changes little, suggesting calibration is dominated by the relative modulation from $u$.

\begin{figure}[t]
	\centering
		\includegraphics[width=\linewidth]{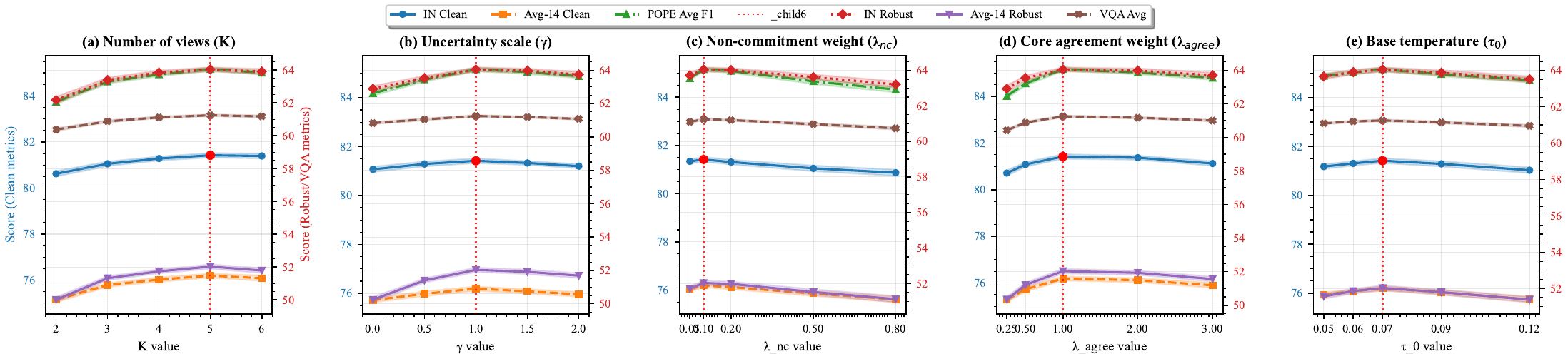}
	\caption{\textbf{Hyperparameter sensitivity of \textsc{TPC}.}
Mean$\pm$std over 3 seeds.}
\label{tab:sens}
\end{figure}

\paragraph{Core--Residual Mechanistic Evidence.}
We test whether the non-commitment regularizer reduces alignment to view-specific residual directions.
On multi-caption data, we measure residual leakage $\ell=(v^\top r)^2$ before and after fine-tuning for standard CLIP and \textsc{TPC}, using the caption consensus to define the residual space.
Fig.~\ref{fig:residual_leakage} shows that standard CLIP fine-tuning increases leakage from $\approx0.049$ to $\approx0.071$, whereas \textsc{TPC} reduces it to $\approx0.040$ and tightens the tail.
This supports that $\lambda_{\mathrm{nc}}$ suppresses over-commitment to unsaid, view-specific components rather than merely improving aggregate accuracy.

\begin{figure}[H]
\centering
\includegraphics[width=0.62\linewidth]{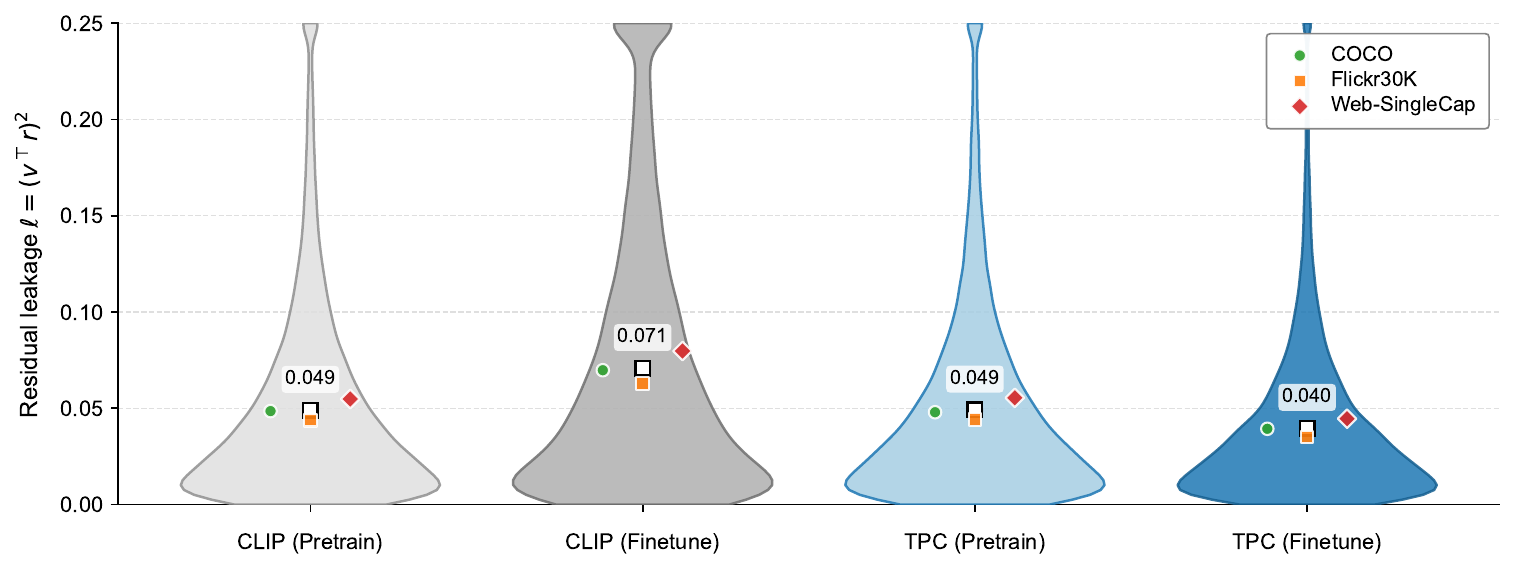}
\caption{\textbf{Residual leakage.}
Distribution of $\ell=(v^\top r)^2$ before/after fine-tuning; lower indicates less residual commitment.
Protocol in App.~\ref{app:residual_leakage_protocol}.}
\label{fig:residual_leakage}
\end{figure}

\paragraph{Uncertainty Calibration.}
We evaluate whether the inferred uncertainty $\hat u$ ranks queries by reliability under weak text constraints.
We sort queries from low to high $\hat u$ and compute risk--coverage curves by retaining increasingly uncertain queries.
Fig.~\ref{fig:risk_coverage_u} shows that \textsc{TPC} achieves lower risk than CLIP at the same coverage, with the largest gains in the selective low-coverage regime.
This indicates that $\hat u$ is a meaningful abstention signal for underspecified text.

\begin{figure}[H]
\centering
\includegraphics[width=0.62\linewidth]{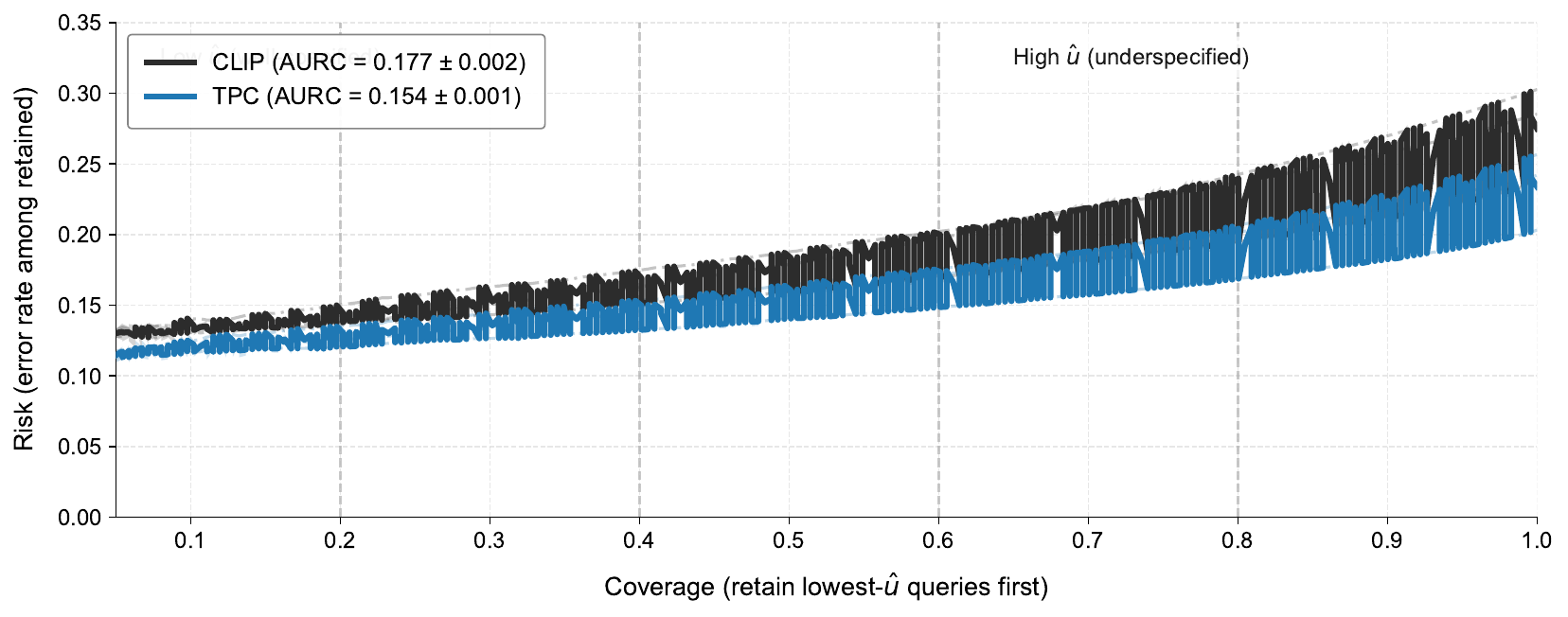}
\caption{\textbf{Risk--coverage from $\hat u$.}
Protocol in App.~\ref{app:risk_coverage_protocol}.}
\label{fig:risk_coverage_u}
\end{figure}

\section{Conclusion}
\label{sec:conclusion}

We address brittleness and overconfidence in vision--language alignment under underspecified captions, and propose \textsc{Text as Partial Constraint} (\textsc{TPC}) to align images to view-invariant semantics while suppressing view-specific residuals with single-text deployability.
Our results show that modeling language as an incomplete constraint and calibrating by view disagreement yields more reliable robustness and LVLM transfer.
Future work will scale view acquisition (generation/retrieval) and extend partial-constraint alignment to multimodal prompting and selective, safety-aware deployment.
\bibliographystyle{plainnat}
\bibliography{reference}

@inproceedings{xiao2026reversible,
  title={Reversible primitive--composition alignment for continual vision--language learning},
  author={Xiao, Canran and Xu, Tianxiang and Ma, Siyuan and Jiang, Yiyang and Gao, Haoyu and Wu, Yuhan},
  booktitle={The Fourteenth International Conference on Learning Representations},
  year={2026}
}

@inproceedings{zhang2026pi,
  title={Pi-cca: Prompt-invariant cca certificates for replay-free continual multimodal learning},
  author={Zhang, Jiayu and Zhao, Chuangxin and Xiao, Canran and Duan, Ruibo and Mo, Wenyi and Gao, Haoyu and Wang, Wenshuo},
  booktitle={The Fourteenth International Conference on Learning Representations},
  year={2026}
}

@inproceedings{zhou2026comem,
  title={Comem: Compositional concept-graph memory for vision--language adaptation},
  author={Zhou, Heng and Tang, Jing and Zhang, Jusheng and Li, Yanshu and Xiao, Canran and Hou, Liwei and Ke, Zong and Yao, Jiawei},
  booktitle={The Fourteenth International Conference on Learning Representations},
  year={2026}
}

@inproceedings{pmlr-v235-schlarmann24a,
  title={Robust CLIP: unsupervised adversarial fine-tuning of vision embeddings for robust large vision-language models},
  author={Schlarmann, Christian and Singh, Naman Deep and Croce, Francesco and Hein, Matthias},
  booktitle={Proceedings of the 41st International Conference on Machine Learning},
  pages={43685--43704},
  year={2024}
}

@inproceedings{dong2025robustsuperalignment,
  title={Robust CLIP: unsupervised adversarial fine-tuning of vision embeddings for robust large vision-language models},
  author={Schlarmann, Christian and Singh, Naman Deep and Croce, Francesco and Hein, Matthias},
  booktitle={Proceedings of the 41st International Conference on Machine Learning},
  pages={43685--43704},
  year={2024}
}

@inproceedings{pmlr-v139-jia21b,
  title={Scaling up visual and vision-language representation learning with noisy text supervision},
  author={Jia, Chao and Yang, Yinfei and Xia, Ye and Chen, Yi-Ting and Parekh, Zarana and Pham, Hieu and Le, Quoc and Sung, Yun-Hsuan and Li, Zhen and Duerig, Tom},
  booktitle={International conference on machine learning},
  pages={4904--4916},
  year={2021},
  organization={PMLR}
}

@inproceedings{leng2024vcd,
  title={Mitigating object hallucinations in large vision-language models through visual contrastive decoding},
  author={Leng, Sicong and Zhang, Hang and Chen, Guanzheng and Li, Xin and Lu, Shijian and Miao, Chunyan and Bing, Lidong},
  booktitle={Proceedings of the IEEE/CVF Conference on Computer Vision and Pattern Recognition},
  pages={13872--13882},
  year={2024}
}

@article{yu2024tgazsr,
  title={Text-guided attention is all you need for zero-shot robustness in vision-language models},
  author={Yu, Lu and Zhang, Haiyang and Xu, Changsheng},
  journal={Advances in Neural Information Processing Systems},
  volume={37},
  pages={96424--96448},
  year={2024}
}

@inproceedings{wang2024pmgaft,
  title={Pre-trained model guided fine-tuning for zero-shot adversarial robustness},
  author={Wang, Sibo and Zhang, Jie and Yuan, Zheng and Shan, Shiguang},
  booktitle={Proceedings of the IEEE/CVF conference on computer vision and pattern recognition},
  pages={24502--24511},
  year={2024}
}

@inproceedings{thrush_and_ross2022winoground,
  title={Winoground: Probing vision and language models for visio-linguistic compositionality},
  author={Thrush, Tristan and Jiang, Ryan and Bartolo, Max and Singh, Amanpreet and Williams, Adina and Kiela, Douwe and Ross, Candace},
  booktitle={Proceedings of the IEEE/CVF Conference on Computer Vision and Pattern Recognition},
  pages={5238--5248},
  year={2022}
}

@inproceedings{parcalabescu-etal-2022-valse,
  title={VALSE: A task-independent benchmark for vision and language models centered on linguistic phenomena},
  author={Parcalabescu, Letitia and Cafagna, Michele and Muradjan, Lilitta and Frank, Anette and Calixto, Iacer and Gatt, Albert},
  booktitle={Proceedings of the 60th Annual Meeting of the Association for Computational Linguistics (Volume 1: Long Papers)},
  pages={8253--8280},
  year={2022}
}

@article{choi2026underspecified,
  title={What Users Leave Unsaid: Under-Specified Queries Limit Vision-Language Models},
  author={Choi, Dasol and Son, Guijin and Lee, Hanwool and Kim, Minhyuk and Ko, Hyunwoo and Lim, Teabin and Eungyeol, Ahn and Kim, Jungwhan and Hong, Seunghyeok and Song, Youngsook},
  journal={arXiv preprint arXiv:2601.06165},
  year={2026}
}

@inproceedings{Alhamoud_2025_CVPR,
  title={Vision-language models do not understand negation},
  author={Alhamoud, Kumail and Alshammari, Shaden and Tian, Yonglong and Li, Guohao and Torr, Philip HS and Kim, Yoon and Ghassemi, Marzyeh},
  booktitle={Proceedings of the Computer Vision and Pattern Recognition Conference},
  pages={29612--29622},
  year={2025}
}

@article{dumpala2024sugarcrepepp,
  title={Sugarcrepe++ dataset: Vision-language model sensitivity to semantic and lexical alterations},
  author={Dumpala, Sri Harsha and Jaiswal, Aman and Shama Sastry, Chandramouli and Milios, Evangelos and Oore, Sageev and Sajjad, Hassan},
  journal={Advances in Neural Information Processing Systems},
  volume={37},
  pages={17972--18018},
  year={2024}
}

@article{hsieh2023sugarcrepe,
  title={Sugarcrepe: Fixing hackable benchmarks for vision-language compositionality},
  author={Hsieh, Cheng-Yu and Zhang, Jieyu and Ma, Zixian and Kembhavi, Aniruddha and Krishna, Ranjay},
  journal={Advances in neural information processing systems},
  volume={36},
  pages={31096--31116},
  year={2023}
}

@article{truong2025directedtokens,
  title={Directed-Tokens: A Robust Multi-Modality Alignment Approach to Large Language-Vision Models},
  author={Truong, Thanh-Dat and Tran, Huu-Thien and Son, Tran Thai and Raj, Bhiksha and Luu, Khoa},
  journal={arXiv preprint arXiv:2508.14264},
  year={2025}
}

@article{jiang2025finegrained,
  title={Analyzing Fine-Grained Alignment and Enhancing Vision Understanding in Multimodal Language Models},
  author={Jiang, Jiachen and Zhou, Jinxin and Peng, Bo and Ning, Xia and Zhu, Zhihui},
  journal={arXiv preprint arXiv:2505.17316},
  year={2025}
}

@inproceedings{Leng_2024_CVPR,
  title={Mitigating object hallucinations in large vision-language models through visual contrastive decoding},
  author={Leng, Sicong and Zhang, Hang and Chen, Guanzheng and Li, Xin and Lu, Shijian and Miao, Chunyan and Bing, Lidong},
  booktitle={Proceedings of the IEEE/CVF Conference on Computer Vision and Pattern Recognition},
  pages={13872--13882},
  year={2024}
}

@inproceedings{li-etal-2023-evaluating,
  title={Evaluating Object Hallucination in Large Vision-Language Models},
  author={Li, Yifan and Du, Yifan and Zhou, Kun and Wang, Jinpeng and Zhao, Wayne Xin and Wen, Ji-Rong},
  booktitle={Proceedings of the 2023 Conference on Empirical Methods in Natural Language Processing},
  pages={292--305},
  year={2023}
}

@article{awadalla2023openflamingo,
  title={Openflamingo: An open-source framework for training large autoregressive vision-language models},
  author={Awadalla, Anas and Gao, Irena and Gardner, Josh and Hessel, Jack and Hanafy, Yusuf and Zhu, Wanrong and Marathe, Kalyani and Bitton, Yonatan and Gadre, Samir and Sagawa, Shiori and others},
  journal={arXiv preprint arXiv:2308.01390},
  year={2023}
}

@article{liu2023llava,
  title={Visual instruction tuning},
  author={Liu, Haotian and Li, Chunyuan and Wu, Qingyang and Lee, Yong Jae},
  journal={Advances in neural information processing systems},
  volume={36},
  pages={34892--34916},
  year={2023}
}

@article{shen2025vlsae,
  title={VL-SAE: Interpreting and Enhancing Vision-Language Alignment with a Unified Concept Set},
  author={Shen, Shufan and Sun, Junshu and Huang, Qingming and Wang, Shuhui},
  journal={arXiv preprint arXiv:2510.21323},
  year={2025}
}

@inproceedings{Wang_2024_CVPR,
  title={Pre-trained model guided fine-tuning for zero-shot adversarial robustness},
  author={Wang, Sibo and Zhang, Jie and Yuan, Zheng and Shan, Shiguang},
  booktitle={Proceedings of the IEEE/CVF conference on computer vision and pattern recognition},
  pages={24502--24511},
  year={2024}
}

@inproceedings{mao2023understanding,
  title={Understanding Zero-shot Adversarial Robustness for Large-Scale Models},
  author={Mao, Chengzhi and Geng, Scott and Yang, Junfeng and Wang, Xin and Vondrick, Carl},
  booktitle={The Eleventh International Conference on Learning Representations},
  year = {2024}
}

@article{sun2023evaclip,
  title={Eva-clip: Improved training techniques for clip at scale},
  author={Sun, Quan and Fang, Yuxin and Wu, Ledell and Wang, Xinlong and Cao, Yue},
  journal={arXiv preprint arXiv:2303.15389},
  year={2023}
}

@inproceedings{zhai2023sigmoid,
  title={Sigmoid loss for language image pre-training},
  author={Zhai, Xiaohua and Mustafa, Basil and Kolesnikov, Alexander and Beyer, Lucas},
  booktitle={Proceedings of the IEEE/CVF international conference on computer vision},
  pages={11975--11986},
  year={2023}
}

@inproceedings{cherti2023reproducible,
  title={Reproducible scaling laws for contrastive language-image learning},
  author={Cherti, Mehdi and Beaumont, Romain and Wightman, Ross and Wortsman, Mitchell and Ilharco, Gabriel and Gordon, Cade and Schuhmann, Christoph and Schmidt, Ludwig and Jitsev, Jenia},
  booktitle={Proceedings of the IEEE/CVF conference on computer vision and pattern recognition},
  pages={2818--2829},
  year={2023}
}

@inproceedings{pmlr-v139-radford21a,
  title={Learning transferable visual models from natural language supervision},
  author={Radford, Alec and Kim, Jong Wook and Hallacy, Chris and Ramesh, Aditya and Goh, Gabriel and Agarwal, Sandhini and Sastry, Girish and Askell, Amanda and Mishkin, Pamela and Clark, Jack and others},
  booktitle={International conference on machine learning},
  pages={8748--8763},
  year={2021},
  organization={PmLR}
}

%%%%%%%%%%%%%%%%%%%%%%%%%%%%%%%%%%%%%%%%%%%%%%%%%%%%%%%%%%%%
\clearpage
\appendix

\section{Proofs}
\label{app:proofs}

\subsection{Full Proofs for \S\ref{sec:theory_dro}}
\label{app:theory_dro}

\begin{lemma}[Containment of observed views and radius bounds]
\label{lem:contain_radius}
Fix an instance $i$ with normalized text views $\{t_{i,k}\}_{k=1}^K\subset\mathbb{R}^d$ and consensus core
$c_i=\mathrm{norm}(\sum_{k=1}^K t_{i,k})$.
Define residuals $r_{i,k}\triangleq t_{i,k}-(t_{i,k}^\top c_i)c_i$ and radii $\delta_{i,k}\triangleq \|r_{i,k}\|_2$.
Let $\mathcal{U}_i\triangleq \mathrm{span}\{r_{i,k}:k\in[K]\}\subseteq c_i^\perp$ and $\rho_i\triangleq\max_{k\in[K]}\delta_{i,k}$.
Assume $t_{i,k}^\top c_i\ge 0$ for all $k\in[K]$.
Then:
\begin{enumerate}
\item (Containment) For every $k\in[K]$, $t_{i,k}\in \mathcal{A}_i(\rho_i)$ where $\mathcal{A}_i(\rho_i)$ is defined in Eq.~\eqref{eq:ambiguity_set}.
\item (Disagreement controls radius) With $u_i=\frac{1}{K}\sum_{k=1}^K\big(1-(t_{i,k}^\top c_i)^2\big)$, the maximal radius satisfies
\begin{equation}
u_i \;\le\; \rho_i^2 \;\le\; K u_i.
\label{eq:rho_u_relation}
\end{equation}
\end{enumerate}
\end{lemma}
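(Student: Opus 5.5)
The plan is to work directly from the orthogonal decomposition of each view along the consensus direction. Throughout, I take $\|c_i\|_2=1$ exactly, as the paper already does when it asserts $u_i\in[0,1]$. This requires $\sum_k t_{i,k}\neq 0$, which the hemisphere assumption $t_{i,k}^\top c_i\ge 0$ essentially presupposes. The $\varepsilon$ in $\mathrm{norm}$ is treated as negligible or set to zero, and I will state this convention explicitly.

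\textbf{Step 1 (Pythagorean identity).} For each $k$, write $t_{i,k}=(t_{i,k}^\top c_i)c_i+r_{i,k}$, where $r_{i,k}\perp c_i$ by construction. Since $\|t_{i,k}\|_2=1$, Pythagoras gives
\[
(t_{i,k}^\top c_i)^2+\delta_{i,k}^2=1,
\qquad\text{that is,}\qquad
\delta_{i,k}^2=1-(t_{i,k}^\top c_i)^2 .
\]
This single identity drives both parts of the lemma.

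\textbf{Step 2 (Containment).} The hemisphere assumption $t_{i,k}^\top c_i\ge 0$ lets us take the nonnegative square root, so $t_{i,k}^\top c_i=\sqrt{1-\delta_{i,k}^2}$. Setting $r=r_{i,k}$ then gives $t_{i,k}=\sqrt{1-\|r\|_2^2}\,c_i+r$. The remaining membership conditions are checked one at a time:
\begin{itemize}
\item $\|t_{i,k}\|_2=1$ holds by hypothesis;
\item $r\in\mathcal{U}_i$ holds because $r_{i,k}$ is one of the spanning vectors of $\mathcal{U}_i$;
\item $\|r\|_2=\delta_{i,k}\le\rho_i$ holds by the definition of $\rho_i$ as a maximum.
\end{itemize}
Hence $t_{i,k}\in\mathcal{A}_i(\rho_i)$. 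Without the hemisphere condition the coefficient on $c_i$ could be negative, which is the only reason the assumption is needed.

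\textbf{Step 3 (Radius bounds).} By Step 1, $u_i=\frac1K\sum_k\delta_{i,k}^2$ is the average of the $\delta_{i,k}^2$, while $\rho_i^2=\max_k\delta_{i,k}^2$ is their maximum.
\begin{itemize}
\item The lower bound $u_i\le\rho_i^2$ holds because an average never exceeds the maximum.
\item The upper bound follows because the $\delta_{i,k}^2$ are nonnegative, so the maximum is at most the sum: $\rho_i^2\le\sum_k\delta_{i,k}^2=Ku_i$.
\end{itemize}

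\textbf{Main obstacle.} I expect no real difficulty. The only point needing care is the normalization convention, namely that $c_i$ must be exactly a unit vector for Step 1 to hold. With the $\varepsilon$-regularized $\mathrm{norm}$ the identity holds only approximately, so I will state the idealization $\varepsilon\to 0$ up front.
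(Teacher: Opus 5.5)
Your proposal is correct and matches the paper's proof step for step. Like the paper, you use the Pythagorean identity $\delta_{i,k}^2 = 1-(t_{i,k}^\top c_i)^2$, then the hemisphere condition to take the nonnegative root for containment, then average $\le$ max $\le$ sum for $u_i\le\rho_i^2\le Ku_i$; your remark that $c_i$ must be exactly unit-norm ($\varepsilon\to 0$, $\sum_k t_{i,k}\neq 0$) spells out an idealization the paper adopts without comment.
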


\begin{proof}
\textbf{Step 1: orthogonal decomposition.}
Since $\|t_{i,k}\|_2=\|c_i\|_2=1$, define $r_{i,k}\triangleq t_{i,k}-(t_{i,k}^\top c_i)c_i$.
Then $c_i^\top r_{i,k}=c_i^\top t_{i,k}-(t_{i,k}^\top c_i)\|c_i\|_2^2=0$, hence $r_{i,k}\in c_i^\perp$ and in particular $r_{i,k}\in\mathcal{U}_i$.

\textbf{Step 2: identify the coefficient on $c_i$.}
Using Pythagoras on the orthogonal decomposition,
\[
\|t_{i,k}\|_2^2
=\|(t_{i,k}^\top c_i)c_i\|_2^2+\|r_{i,k}\|_2^2
=(t_{i,k}^\top c_i)^2+\delta_{i,k}^2.
\]
Since $\|t_{i,k}\|_2=1$, we obtain
\begin{equation}
\delta_{i,k}^2 = 1-(t_{i,k}^\top c_i)^2.
\label{eq:delta_identity}
\end{equation}
Under the assumption $t_{i,k}^\top c_i\ge 0$, we have
$t_{i,k}^\top c_i=\sqrt{1-\delta_{i,k}^2}$.

\textbf{Step 3: prove containment.}
Rewrite each view as
\[
t_{i,k}=(t_{i,k}^\top c_i)c_i+r_{i,k}
=\sqrt{1-\delta_{i,k}^2}\,c_i+r_{i,k}.
\]
Since $r_{i,k}\in\mathcal{U}_i$ and $\|r_{i,k}\|_2=\delta_{i,k}\le \rho_i$, this matches Eq.~\eqref{eq:ambiguity_set}, hence $t_{i,k}\in\mathcal{A}_i(\rho_i)$.

\textbf{Step 4: relate $\rho_i$ and $u_i$.}
By Eq.~\eqref{eq:delta_identity},
\[
u_i=\frac{1}{K}\sum_{k=1}^K\big(1-(t_{i,k}^\top c_i)^2\big)
=\frac{1}{K}\sum_{k=1}^K \delta_{i,k}^2.
\]
The left inequality in Eq.~\eqref{eq:rho_u_relation} follows since $\max_k \delta_{i,k}^2\ge \frac{1}{K}\sum_k \delta_{i,k}^2$:
\[
\rho_i^2=\max_k \delta_{i,k}^2 \;\ge\; \frac{1}{K}\sum_{k=1}^K \delta_{i,k}^2=u_i.
\]
For the right inequality, use $\max_k \delta_{i,k}^2 \le \sum_{k=1}^K \delta_{i,k}^2$:
\[
\rho_i^2=\max_k \delta_{i,k}^2 \;\le\; \sum_{k=1}^K \delta_{i,k}^2 = K u_i.
\]
This proves Eq.~\eqref{eq:rho_u_relation}.
\end{proof}

\begin{theorem}[Tight invariance bound under caption-view shift]
\label{thm:robust_similarity_full}
Fix $c\in\mathbb{R}^d$ with $\|c\|_2=1$, a subspace $\mathcal{U}\subseteq c^\perp$, and $\rho\in[0,1)$.
Let
\[
\mathcal{A}(c,\mathcal{U},\rho)=
\Big\{t\in\mathbb{R}^d:\ \|t\|_2=1,\ t=\sqrt{1-\|r\|_2^2}\,c+r,\ r\in\mathcal{U},\ \|r\|_2\le \rho\Big\}.
\]
For any $v\in\mathbb{R}^d$ with $\|v\|_2=1$ and $a=v^\top c\ge 0$, define $b=\|\mathbf{P}_{\mathcal{U}}v\|_2$.
Then
\[
\inf_{t\in\mathcal{A}(c,\mathcal{U},\rho)} v^\top t
=
a\sqrt{1-\rho^2}-\rho b.
\]
Moreover, for any monotone non-increasing $\ell:\mathbb{R}\to\mathbb{R}$,
\[
\sup_{t\in\mathcal{A}(c,\mathcal{U},\rho)}\ell(v^\top t)=\ell\!\Big(a\sqrt{1-\rho^2}-\rho b\Big).
\]
\end{theorem}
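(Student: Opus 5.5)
The plan is to parametrize the ambiguity set explicitly, reduce the infimum to a one-dimensional problem in the residual norm, and then handle the loss statement by monotonicity. First I will check that the constraint $\|t\|_2=1$ is redundant. Take any $r\in\mathcal{U}$ with $s\triangleq\|r\|_2\le\rho<1$ and set $t=\sqrt{1-s^2}\,c+r$. Since $\mathcal{U}\subseteq c^\perp$, Pythagoras gives $\|t\|_2^2=(1-s^2)+s^2=1$. Hence $\mathcal{A}(c,\mathcal{U},\rho)$ is exactly the image of the ball $\{r\in\mathcal{U}:\|r\|_2\le\rho\}$ under this map. For such $t$,
\[
v^\top t=\sqrt{1-s^2}\,a+v^\top r=\sqrt{1-s^2}\,a+(\mathbf{P}_{\mathcal{U}}v)^\top r,
\]
where the second equality uses $r\in\mathcal{U}$.

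\textbf{Inner minimization at fixed radius.} Fix the radius $s$. By Cauchy--Schwarz, $(\mathbf{P}_{\mathcal{U}}v)^\top r\ge -s\,b$, so $v^\top t\ge g(s)\triangleq a\sqrt{1-s^2}-s\,b$. Equality is attained by a specific choice of $r$:
\begin{itemize}
\item if $b>0$, take $r=-s\,\mathbf{P}_{\mathcal{U}}v/b$;
\item if $b=0$, take any $r\in\mathcal{U}$ of norm $s$, which requires $\mathcal{U}\neq\{0\}$.
\end{itemize}
The degenerate case $\mathcal{U}=\{0\}$ forces $r=0$, and the formula then holds only when $\rho=0$. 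In the setting of Lemma~\ref{lem:contain_radius} this is automatic, since $\mathcal{U}_i=\{0\}$ implies $\rho_i=0$. I will state this side condition explicitly.

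\textbf{Outer minimization over the radius.} Next I minimize $g$ over $s\in[0,\rho]$. Its derivative is
\[
g'(s)=-\frac{a\,s}{\sqrt{1-s^2}}-b,
\]
which is $\le 0$ because $a\ge 0$ (this is exactly where the hypothesis $a=v^\top c\ge0$ enters) and $b\ge0$. So $g$ is non-increasing on $[0,\rho]\subset[0,1)$, and its minimum is $g(\rho)=a\sqrt{1-\rho^2}-\rho b$. This value is attained by the explicit $t^\star=\sqrt{1-\rho^2}\,c+r^\star$, where $r^\star$ is the equality case of the inner step at $s=\rho$. Therefore the infimum is a minimum and equals the claimed expression.

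\textbf{Loss statement.} Let $m$ denote the minimal similarity found above. For every $t\in\mathcal{A}$ we have $v^\top t\ge m$, so $\ell(v^\top t)\le\ell(m)$ because $\ell$ is non-increasing. Equality holds at $t^\star$, which yields the supremum identity.

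I expect the only delicate points to be attainment and the edge cases $b=0$ and $\mathcal{U}=\{0\}$, not the main inequality. The role of $a\ge0$ also needs care. Without it, $g$ could be non-monotone: when $a<0$ and $b$ is small, $g$ can decrease toward $s=0$. In that case the formula would only be an upper bound on the infimum rather than an equality.
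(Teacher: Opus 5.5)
Your proposal is correct and follows the paper's proof essentially step for step. Both reduce to the residual variable $r$, apply Cauchy--Schwarz at fixed radius $s$, use $a\ge 0$ to show $s\mapsto a\sqrt{1-s^2}-bs$ is non-increasing on $[0,\rho]$, and pass to the loss via monotonicity plus attainment.

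Your edge-case handling is more careful than the paper's. For $b=0$ the paper picks $r^\star=0$ and asserts $\|r^\star\|_2=\rho$, which is false when $\rho>0$. It also never notes that the identity fails for $\mathcal{U}=\{0\}$ with $a\rho>0$, so your side condition is genuinely needed. One small correction: in that degenerate case the formula holds when $\rho=0$ or $a=0$, not only when $\rho=0$.
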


\begin{proof}
\textbf{Step 1: reduce to the residual variable.}
Take any $t\in\mathcal{A}(c,\mathcal{U},\rho)$.
By definition, there exists $r\in\mathcal{U}$ with $\|r\|_2\le \rho$ such that
$t=\sqrt{1-\|r\|_2^2}\,c+r$.
Then
\begin{align}
v^\top t
&= v^\top\!\big(\sqrt{1-\|r\|_2^2}\,c+r\big)
= (v^\top c)\sqrt{1-\|r\|_2^2}+ v^\top r.
\label{eq:vt_expand}
\end{align}
Since $r\in\mathcal{U}$ and $\mathbf{P}_{\mathcal{U}}$ is the orthogonal projector,
$v^\top r = (\mathbf{P}_{\mathcal{U}}v)^\top r$.
Denote $u\triangleq \mathbf{P}_{\mathcal{U}}v$ so that $\|u\|_2=b$.
Eq.~\eqref{eq:vt_expand} becomes
\begin{equation}
v^\top t = a\sqrt{1-\|r\|_2^2} + u^\top r,
\qquad \text{with}\quad r\in\mathcal{U},\ \|r\|_2\le \rho.
\label{eq:vt_reduce}
\end{equation}

\textbf{Step 2: minimize over the direction of $r$ for fixed radius.}
Fix any radius $s\in[0,\rho]$ and consider all $r\in\mathcal{U}$ with $\|r\|_2=s$.
By Cauchy--Schwarz,
\begin{equation}
u^\top r \ge -\|u\|_2\|r\|_2 = - b s,
\label{eq:cs}
\end{equation}
with equality achieved by choosing $r=-s\,u/\|u\|_2$ when $b>0$ (and any $r$ when $b=0$).
Plugging Eq.~\eqref{eq:cs} into Eq.~\eqref{eq:vt_reduce} yields, for fixed $s$,
\begin{equation}
\inf_{\substack{r\in\mathcal{U}\\ \|r\|_2=s}} v^\top t
=
a\sqrt{1-s^2}-bs.
\label{eq:fixed_s_value}
\end{equation}

\textbf{Step 3: minimize over the radius $s\in[0,\rho]$.}
Define the scalar function
$f(s)\triangleq a\sqrt{1-s^2}-bs$ on $[0,\rho]$.
Because $a\ge 0$ and $b\ge 0$, its derivative satisfies, for all $s\in(0,\rho)$,
\begin{equation}
f'(s)= -\frac{a s}{\sqrt{1-s^2}} - b \;<\;0.
\label{eq:derivative_negative}
\end{equation}
Thus $f$ is strictly decreasing on $[0,\rho]$, and the minimum is attained at $s=\rho$:
\begin{equation}
\min_{s\in[0,\rho]} f(s)= f(\rho)= a\sqrt{1-\rho^2}-b\rho.
\label{eq:min_at_rho}
\end{equation}
Combining Eq.~\eqref{eq:fixed_s_value} and Eq.~\eqref{eq:min_at_rho} proves
\[
\inf_{t\in\mathcal{A}(c,\mathcal{U},\rho)} v^\top t
= a\sqrt{1-\rho^2}-\rho b.
\]
Tightness follows by selecting $r^\star=-\rho\,u/\|u\|_2$ when $b>0$ (and $r^\star=0$ when $b=0$), which satisfies $\|r^\star\|_2=\rho$ and attains equality in Eq.~\eqref{eq:cs}.

\textbf{Step 4: convert worst-case similarity to worst-case loss.}
Since $\ell$ is monotone non-increasing, the maximizer of $\ell(v^\top t)$ over $t\in\mathcal{A}(c,\mathcal{U},\rho)$ is achieved at the minimizer of $v^\top t$:
\[
\sup_{t\in\mathcal{A}(c,\mathcal{U},\rho)}\ell(v^\top t)
=
\ell\!\Big(\inf_{t\in\mathcal{A}(c,\mathcal{U},\rho)} v^\top t\Big)
=
\ell\!\Big(a\sqrt{1-\rho^2}-\rho b\Big).
\]
\end{proof}

\subsection{Full Proofs for \S\ref{sec:theory_gen}}
\label{app:theory_gen}

\subsubsection{Auxiliary Lemmas}

\begin{lemma}[Second-moment bound for the averaged view noise]
\label{lem:mean_second_moment}
Let $\{\varepsilon_k\}_{k=1}^K$ be independent random vectors in $\mathbb{R}^d$ such that $\mathbb{E}[\varepsilon_k]=0$ and
$\|\varepsilon_k\|_2\le \sigma$ almost surely.
Then
\begin{equation}
\mathbb{E}\Big\|\frac{1}{K}\sum_{k=1}^K \varepsilon_k\Big\|_2^2
\;\le\;
\frac{\sigma^2}{K},
\qquad
\mathbb{E}\Big\|\frac{1}{K}\sum_{k=1}^K \varepsilon_k\Big\|_2
\;\le\;
\frac{\sigma}{\sqrt{K}}.
\label{eq:mean_second_moment}
\end{equation}
\end{lemma}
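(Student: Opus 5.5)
We will prove the first inequality by expanding the squared norm and using independence together with the zero-mean property. The second will then follow from Jensen's inequality.

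Write $S=\frac1K\sum_{k=1}^K\varepsilon_k$. We expand
\[
\|S\|_2^2=\frac{1}{K^2}\sum_{k=1}^K\|\varepsilon_k\|_2^2+\frac{1}{K^2}\sum_{k\neq l}\varepsilon_k^\top\varepsilon_l .
\]
Taking expectations, the bounded-norm assumption gives $\mathbb{E}\|\varepsilon_k\|_2^2\le\sigma^2$. So the diagonal part contributes at most $K\sigma^2/K^2=\sigma^2/K$.

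For the cross terms with $k\neq l$, independence gives $\mathbb{E}[\varepsilon_k^\top\varepsilon_l]=\mathbb{E}[\varepsilon_k]^\top\mathbb{E}[\varepsilon_l]=0$. Integrability is not an issue, since $|\varepsilon_k^\top\varepsilon_l|\le\sigma^2$ almost surely. This yields $\mathbb{E}\|S\|_2^2\le\sigma^2/K$.

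For the first-moment bound, we apply Jensen's inequality to the concave map $x\mapsto\sqrt{x}$. Equivalently, we can use Cauchy--Schwarz, $\mathbb{E}\|S\|_2\le(\mathbb{E}\|S\|_2^2)^{1/2}$. Either way this gives $\mathbb{E}\|S\|_2\le\sigma/\sqrt K$.

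No step is a real obstacle. The only point needing care is making sure the cross terms vanish. Pairwise independence plus zero mean suffices for this, and it matches how the lemma is used in the proof of Theorem~\ref{thm:pac_denoise}, where the views are conditionally independent given $\mu_i$. In that setting the same argument applies conditionally on $\mu_i$.
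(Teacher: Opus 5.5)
Your proof is correct and matches the paper's argument essentially step for step: expand the squared norm, make the cross terms vanish via independence and zero mean, bound the diagonal by $\sigma^2$, and finish with Jensen. Your remarks on integrability, and that pairwise independence already suffices, are valid refinements the paper leaves implicit.
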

\begin{proof}
Let $\bar\varepsilon=\frac{1}{K}\sum_{k=1}^K \varepsilon_k$.
By expanding the squared norm and using independence and zero mean,
\begin{align*}
\mathbb{E}\|\bar\varepsilon\|_2^2
&=\mathbb{E}\Big\langle \frac{1}{K}\sum_{k=1}^K\varepsilon_k,\frac{1}{K}\sum_{\ell=1}^K\varepsilon_\ell\Big\rangle
=\frac{1}{K^2}\sum_{k=1}^K \mathbb{E}\|\varepsilon_k\|_2^2
+\frac{1}{K^2}\sum_{k\neq \ell}\mathbb{E}\langle \varepsilon_k,\varepsilon_\ell\rangle \\
&=\frac{1}{K^2}\sum_{k=1}^K \mathbb{E}\|\varepsilon_k\|_2^2
+\frac{1}{K^2}\sum_{k\neq \ell}\Big\langle \mathbb{E}[\varepsilon_k],\mathbb{E}[\varepsilon_\ell]\Big\rangle
\le \frac{1}{K^2}\sum_{k=1}^K \sigma^2
=\frac{\sigma^2}{K}.
\end{align*}
The second inequality follows from Jensen:
$\mathbb{E}\|\bar\varepsilon\|_2 \le \sqrt{\mathbb{E}\|\bar\varepsilon\|_2^2}\le \sigma/\sqrt{K}$.
\end{proof}

\begin{lemma}[McDiarmid concentration for the averaged noise norm]
\label{lem:mcdiarmid_vector_mean}
Let $\{\varepsilon_k\}_{k=1}^K$ be independent random vectors with $\|\varepsilon_k\|_2\le \sigma$ almost surely.
Define $g(\varepsilon_1,\dots,\varepsilon_K)\triangleq \big\|\frac{1}{K}\sum_{k=1}^K\varepsilon_k\big\|_2$.
Then for any $\delta\in(0,1)$,
\begin{equation}
\mathbb{P}\!\left(
g-\mathbb{E}[g] \ge \sigma\sqrt{\frac{2\log(1/\delta)}{K}}
\right)
\le \delta.
\label{eq:mcdiarmid}
\end{equation}
\end{lemma}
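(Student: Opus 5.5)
The plan is to apply McDiarmid's bounded-differences inequality directly to $g(\varepsilon_1,\dots,\varepsilon_K)=\|\frac{1}{K}\sum_k\varepsilon_k\|_2$, viewed as a function of the $K$ independent inputs $\varepsilon_k$, each supported in the ball $\{\|e\|_2\le\sigma\}$. No use of the zero-mean assumption or of Lemma~\ref{lem:mean_second_moment} is needed here. The mean bound from that lemma will only be combined with this deviation bound later, to produce $\eta_{K,n}(\delta)$ in Theorem~\ref{thm:pac_denoise}.

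\textbf{Step 1: bounded differences.} Fix an index $k$ and two admissible values $\varepsilon_k,\varepsilon_k'$ with all other coordinates held equal. By the reverse triangle inequality,
\[
|g(\dots,\varepsilon_k,\dots)-g(\dots,\varepsilon_k',\dots)|
\le \tfrac{1}{K}\|\varepsilon_k-\varepsilon_k'\|_2
\le \tfrac{2\sigma}{K}\triangleq c_k .
\]
So $g$ satisfies the bounded-differences condition with constants $c_k=2\sigma/K$. Since $g$ is continuous, it is measurable, and independence of the $\varepsilon_k$ is given.

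\textbf{Step 2: McDiarmid.} The one-sided McDiarmid inequality gives
\[
\mathbb{P}(g-\mathbb{E}g\ge t)\le\exp\!\Big(-\frac{2t^2}{\sum_k c_k^2}\Big).
\]
With $\sum_k c_k^2=4\sigma^2/K$, this becomes $\exp(-Kt^2/(2\sigma^2))$. Setting the right-hand side equal to $\delta$ gives $t=\sigma\sqrt{2\log(1/\delta)/K}$, which is exactly the threshold in Eq.~\eqref{eq:mcdiarmid}.

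\textbf{Main obstacle.} The argument is routine; the only point requiring care is the constant. Using the crude difference bound $2\sigma/K$ together with the factor $2$ in McDiarmid's exponent yields exactly $\sqrt{2\log(1/\delta)/K}$, so no sharper difference bound is needed. If $\mathbb{E}g$ is desired finite, it is since $g\le\sigma$ almost surely.
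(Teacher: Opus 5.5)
Your proposal is correct and follows essentially the same route as the paper's proof: the reverse triangle inequality gives bounded differences $c_k = 2\sigma/K$, and McDiarmid then yields $\exp(-Kt^2/(2\sigma^2))$, which you invert at $t=\sigma\sqrt{2\log(1/\delta)/K}$. Your added remarks on measurability, finiteness of $\mathbb{E}g$, and the fact that the zero-mean assumption is not needed are accurate but go beyond what the paper states.
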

\begin{proof}
We verify bounded differences.
Let $(\varepsilon_1,\dots,\varepsilon_K)$ and $(\varepsilon_1,\dots,\varepsilon_k',\dots,\varepsilon_K)$ differ only at coordinate $k$.
Then by the reverse triangle inequality,
\begin{align*}
\big|g(\varepsilon_1,\dots,\varepsilon_K)-g(\varepsilon_1,\dots,\varepsilon_k',\dots,\varepsilon_K)\big|
&\le
\Big\|\frac{1}{K}\sum_{j=1}^K\varepsilon_j-\frac{1}{K}\Big(\varepsilon_k'+\sum_{j\neq k}\varepsilon_j\Big)\Big\|_2 \\
&=
\frac{1}{K}\|\varepsilon_k-\varepsilon_k'\|_2
\le \frac{1}{K}\big(\|\varepsilon_k\|_2+\|\varepsilon_k'\|_2\big)
\le \frac{2\sigma}{K}.
\end{align*}
Thus the bounded difference constants are $c_k=2\sigma/K$ for all $k$.
McDiarmid's inequality yields, for any $t>0$,
\[
\mathbb{P}(g-\mathbb{E}g\ge t)
\le
\exp\!\left(
-\frac{2t^2}{\sum_{k=1}^K c_k^2}
\right)
=
\exp\!\left(
-\frac{2t^2}{K\cdot (4\sigma^2/K^2)}
\right)
=
\exp\!\left(
-\frac{K t^2}{2\sigma^2}
\right).
\]
Setting $t=\sigma\sqrt{2\log(1/\delta)/K}$ gives Eq.~\eqref{eq:mcdiarmid}.
\end{proof}

\begin{lemma}[Normalization perturbation inequality]
\label{lem:normalize_perturb}
Let $\mu\in\mathbb{R}^d$ satisfy $\|\mu\|_2=1$ and let $e\in\mathbb{R}^d$ satisfy $\|e\|_2<1$.
Define $c=\mathrm{norm}(\mu+e)=(\mu+e)/\|\mu+e\|_2$.
Then
\begin{equation}
\|c-\mu\|_2
\le
\frac{2\|e\|_2}{1-\|e\|_2}.
\label{eq:norm_perturb}
\end{equation}
\end{lemma}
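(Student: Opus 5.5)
We prove Lemma~\ref{lem:normalize_perturb} by splitting $c-\mu$ into a directional error and a length error and bounding each with the triangle inequality. Write $w\triangleq\mu+e$. Since $\|\mu\|_2=1$ and $\|e\|_2<1$, the reverse triangle inequality gives
\[
\|w\|_2\ge 1-\|e\|_2>0,
\]
so $c=w/\|w\|_2$ is well defined. We also have $\bigl|\|w\|_2-1\bigr|\le\|e\|_2$.

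The key step is the identity
\[
c-\mu=\frac{w}{\|w\|_2}-w+w-\mu=w\Bigl(\frac{1}{\|w\|_2}-1\Bigr)+e .
\]
Taking norms, the first term equals $\bigl|1-\|w\|_2\bigr|$, because $\|w\|_2\cdot\bigl|1/\|w\|_2-1\bigr|=\bigl|1-\|w\|_2\bigr|$. This is at most $\|e\|_2$, so
\[
\|c-\mu\|_2\le 2\|e\|_2 .
\]
This already implies the claimed bound, since $1-\|e\|_2\in(0,1]$ gives
\[
2\|e\|_2\le\frac{2\|e\|_2}{1-\|e\|_2}.
\]

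Alternatively, one can reproduce the stated form directly with the decomposition
\[
c-\mu=\frac{\mu+e-\|w\|_2\,\mu}{\|w\|_2}.
\]
The numerator has norm at most $\bigl|1-\|w\|_2\bigr|+\|e\|_2\le 2\|e\|_2$, and the denominator is at least $1-\|e\|_2$. Dividing gives exactly
\[
\|c-\mu\|_2\le\frac{2\|e\|_2}{1-\|e\|_2}.
\]
I would present this second route, since it matches the statement.

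There is no real obstacle here. The only point needing care is ensuring $\|w\|_2>0$ so that the normalization (and the $\varepsilon$ in $\mathrm{norm}$, taken as $0$ here) is well defined; this is exactly where the hypothesis $\|e\|_2<1$ is used.
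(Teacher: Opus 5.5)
Your proposal is correct, and the second route you plan to present is exactly the paper's proof: write $c-\mu=\bigl(e+(1-\|w\|_2)\mu\bigr)/\|w\|_2$, bound the numerator by $2\|e\|_2$ using $\bigl|1-\|w\|_2\bigr|\le\|e\|_2$, and bound the denominator below by $1-\|e\|_2$. Your first route is also valid and gives the sharper, denominator-free estimate $\|c-\mu\|_2\le 2\|e\|_2$; downstream, this would let $\varepsilon_{K,n}(\delta)$ in Theorem~\ref{thm:pac_denoise} be replaced by $2\eta_{K,n}(\delta)$, with no need for the side condition $\eta_{K,n}(\delta)\le\tfrac12$.
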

\begin{proof}
Let $u=\mu+e$ and note that $\|u\|_2\ge \|\mu\|_2-\|e\|_2=1-\|e\|_2$ by the triangle inequality.
Then
\begin{align*}
\|c-\mu\|_2
&=\Big\|\frac{u}{\|u\|_2}-\mu\Big\|_2
=\frac{1}{\|u\|_2}\|u-\|u\|_2\mu\|_2 \\
&=\frac{1}{\|u\|_2}\|\mu+e-\|u\|_2\mu\|_2
=\frac{1}{\|u\|_2}\|e+(1-\|u\|_2)\mu\|_2 \\
&\le \frac{1}{\|u\|_2}\Big(\|e\|_2+|1-\|u\|_2|\cdot\|\mu\|_2\Big)
= \frac{1}{\|u\|_2}\big(\|e\|_2+|1-\|u\|_2|\big).
\end{align*}
Using the reverse triangle inequality,
$|1-\|u\|_2|=|\|\mu\|_2-\|u\|_2|\le \|\mu-u\|_2=\|e\|_2$.
Therefore,
\[
\|c-\mu\|_2
\le \frac{1}{\|u\|_2}(2\|e\|_2)
\le \frac{2\|e\|_2}{1-\|e\|_2},
\]
which proves Eq.~\eqref{eq:norm_perturb}.
\end{proof}

\begin{lemma}[Lipschitz transfer from core error to loss error]
\label{lem:lipschitz_transfer}
Let $\ell:[-1,1]\to\mathbb{R}$ be $L$-Lipschitz.
For any $v\in\mathbb{S}^{d-1}$ and any $a,b\in\mathbb{S}^{d-1}$,
\begin{equation}
\big|\ell(v^\top a)-\ell(v^\top b)\big|
\le L\,\|a-b\|_2.
\label{eq:lipschitz_transfer}
\end{equation}
\end{lemma}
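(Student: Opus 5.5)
The statement to prove is Lemma~\ref{lem:lipschitz_transfer}. The plan is a two-step composition: Lipschitz continuity of $\ell$ on the scalar level, followed by Cauchy--Schwarz to control the change in the scalar argument.

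First check that the expressions are well defined. Since $v,a,b\in\mathbb{S}^{d-1}$, Cauchy--Schwarz gives $|v^\top a|\le\|v\|_2\|a\|_2=1$ and likewise $|v^\top b|\le 1$. Hence both arguments lie in $[-1,1]$, the domain on which $\ell$ is assumed $L$-Lipschitz.

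Apply the Lipschitz property directly:
\[
\big|\ell(v^\top a)-\ell(v^\top b)\big|\le L\,|v^\top a-v^\top b|=L\,|v^\top(a-b)|.
\]
Apply Cauchy--Schwarz once more, using $\|v\|_2=1$:
\[
|v^\top(a-b)|\le\|v\|_2\|a-b\|_2=\|a-b\|_2.
\]
Chaining the two inequalities yields Eq.~\eqref{eq:lipschitz_transfer}.

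There is no real obstacle. The only point needing care is the domain check in the first step, which guarantees $\ell$ is evaluated where its Lipschitz constant applies. Unit norms are used twice: to keep the inner products in $[-1,1]$, and to make the Cauchy--Schwarz factor $\|v\|_2$ equal to one, so no extra constant appears.

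This lemma will later be combined with Lemma~\ref{lem:normalize_perturb}, taking $a=c_i$ and $b=\mu_i$, to convert the core estimation error into a bound on $|R_c(f)-R_\mu(f)|$ in the proof of Theorem~\ref{thm:pac_denoise}.
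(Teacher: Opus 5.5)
Your proof is correct and matches the paper's argument: Cauchy--Schwarz with $\|v\|_2=1$ bounds $|v^\top a-v^\top b|$ by $\|a-b\|_2$, and the $L$-Lipschitz property of $\ell$ gives the result. Your explicit check that $v^\top a,\,v^\top b\in[-1,1]$ is a small, useful addition that the paper leaves implicit.
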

\begin{proof}
Since $\|v\|_2=1$, Cauchy--Schwarz gives $|v^\top a - v^\top b|=|v^\top(a-b)|\le \|a-b\|_2$.
By Lipschitzness of $\ell$,
$|\ell(v^\top a)-\ell(v^\top b)|\le L|v^\top a-v^\top b|\le L\|a-b\|_2$.
\end{proof}

\begin{lemma}[Rademacher contraction for inner-product losses]
\label{lem:contraction}
Let $\ell:[-1,1]\to\mathbb{R}$ be $L$-Lipschitz and let $\mathcal{F}\subseteq\{f:\mathcal{X}\to\mathbb{S}^{d-1}\}$.
For any fixed sample $\{(x_i,c_i)\}_{i=1}^n$ with $\|c_i\|_2=1$,
\begin{equation}
\mathfrak{R}_n(\ell\circ\mathcal{F})
\triangleq
\mathbb{E}_{\epsilon}\Big[\sup_{f\in\mathcal{F}}\frac{1}{n}\sum_{i=1}^n \epsilon_i\,\ell\big(f(x_i)^\top c_i\big)\Big]
\le
L\,\mathfrak{R}_n(\mathcal{F}),
\label{eq:contraction}
\end{equation}
where $\mathfrak{R}_n(\mathcal{F})=\mathbb{E}_{\epsilon}\big[\sup_{f\in\mathcal{F}}\frac{1}{n}\sum_{i=1}^n \epsilon_i f(x_i)^\top c_i\big]$.
\end{lemma}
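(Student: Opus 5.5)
The plan is to reduce the statement to the standard scalar Ledoux--Talagrand contraction inequality, conditioning on the fixed sample $\{(x_i,c_i)\}_{i=1}^n$ throughout. The reduction rests on writing each summand as a composition: for each $i$, set $h_i(f)\triangleq f(x_i)^\top c_i$. Because $\|f(x_i)\|_2=\|c_i\|_2=1$, Cauchy--Schwarz gives $h_i(f)\in[-1,1]$, which is exactly the domain on which $\ell$ is defined and $L$-Lipschitz. The loss class is therefore $\{i\mapsto \ell(h_i(f)) : f\in\mathcal{F}\}$, and each index $i$ carries the same $L$-Lipschitz map $\phi_i=\ell$ applied to a scalar.

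The main step is to invoke the contraction lemma in its comparison form. For $L$-Lipschitz functions $\phi_i:\mathbb{R}\to\mathbb{R}$ and any set $T\subseteq\mathbb{R}^n$,
\[
\mathbb{E}_\epsilon\sup_{\tau\in T}\sum_{i=1}^n\epsilon_i\phi_i(\tau_i)\le L\,\mathbb{E}_\epsilon\sup_{\tau\in T}\sum_{i=1}^n\epsilon_i\tau_i .
\]
We apply it with $T=\{(h_1(f),\dots,h_n(f)):f\in\mathcal{F}\}\subseteq[-1,1]^n$. Since $\ell$ is only defined on $[-1,1]$, we first extend it to an $L$-Lipschitz function on $\mathbb{R}$, for instance by clamping its argument to $[-1,1]$. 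The extension agrees with $\ell$ on $T$, so the left-hand side is unchanged. Dividing by $n$ then yields exactly the claimed inequality, with $\mathfrak{R}_n(\mathcal{F})$ defined via the inner products $f(x_i)^\top c_i$ as in the statement.

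For self-containedness, I would sketch the standard proof of the contraction inequality: peel off one Rademacher variable at a time. Condition on $\epsilon_2,\dots,\epsilon_n$ and write $A(\tau)$ for the remaining terms. Averaging over $\epsilon_1$ gives $\tfrac12\sup_{\tau,\tau'}[\phi(\tau_1)-\phi(\tau'_1)+A(\tau)+A(\tau')]$. By the Lipschitz property this is at most $\tfrac12\sup_{\tau,\tau'}[L|\tau_1-\tau'_1|+A(\tau)+A(\tau')]$. Since the expression is symmetric in $(\tau,\tau')$, the absolute value can be dropped, which gives $\mathbb{E}_{\epsilon_1}\sup_\tau[L\epsilon_1\tau_1+A(\tau)]$. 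Iterating over all $n$ coordinates completes the argument.

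The main obstacle I expect is the domain issue. The supremum may not be attained, so the peeling step should be run with $\eta$-approximate maximizers and then $\eta\to0$. Measurability of the suprema, if $\mathcal{F}$ is uncountable, is handled in the usual way by restricting to countable subfamilies. No constant-offset issue arises, because the statement omits any term for $\ell(0)$ and the contraction inequality with the sup over $T$ does not require $\phi(0)=0$ in this one-sided form.
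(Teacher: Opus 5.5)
Your proposal is correct and follows the paper's route: the paper simply invokes the Ledoux--Talagrand contraction principle with $\phi_i=\ell$ applied to the scalars $f(x_i)^\top c_i\in[-1,1]$. Your version additionally makes explicit the range check, the Lipschitz extension, and the fact that the one-sided form needs no condition on $\ell(0)$ (so the paper's remark about $\phi_i(0)$ is not needed). Your peeling sketch is sound. The caveats about approximate maximizers and measurability are unnecessary, because $\sup_\tau X(\tau)+\sup_{\tau'}Y(\tau')=\sup_{\tau,\tau'}[X(\tau)+Y(\tau')]$ holds exactly for these bounded quantities and $\epsilon$ ranges over the finite set $\{\pm1\}^n$.
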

\begin{proof}
This is a direct application of the Ledoux--Talagrand contraction principle.
For completeness, let $\phi_i(s)=\ell(s)$.
Each $\phi_i$ is $L$-Lipschitz on $[-1,1]$ and $\phi_i(0)$ is constant, hence
\[
\mathbb{E}_{\epsilon}\Big[\sup_{f\in\mathcal{F}}\frac{1}{n}\sum_{i=1}^n \epsilon_i\,\phi_i(f(x_i)^\top c_i)\Big]
\le
L\,\mathbb{E}_{\epsilon}\Big[\sup_{f\in\mathcal{F}}\frac{1}{n}\sum_{i=1}^n \epsilon_i\,f(x_i)^\top c_i\Big],
\]
which is Eq.~\eqref{eq:contraction}.
\end{proof}

\subsubsection{Proof of Theorem~\ref{thm:pac_denoise}}

\begin{proof}
We split the proof into four steps corresponding to the proof sketch.

\textbf{Step 1: uniform bound on the averaged noise.}
Fix $i\in[n]$ and write $\bar\varepsilon_i=\frac{1}{K}\sum_{k=1}^K \varepsilon_{i,k}$.
By Lemma~\ref{lem:mcdiarmid_vector_mean}, for any $\delta_i\in(0,1)$,
\begin{equation}
\mathbb{P}\!\left(
\|\bar\varepsilon_i\|_2 \ge \mathbb{E}\|\bar\varepsilon_i\|_2 + \sigma\sqrt{\frac{2\log(1/\delta_i)}{K}}
\right)\le \delta_i.
\label{eq:bar_eps_tail}
\end{equation}
By Lemma~\ref{lem:mean_second_moment}, $\mathbb{E}\|\bar\varepsilon_i\|_2\le \sigma/\sqrt{K}$.
Substituting into Eq.~\eqref{eq:bar_eps_tail} gives
\begin{equation}
\mathbb{P}\!\left(
\|\bar\varepsilon_i\|_2 \ge \frac{\sigma}{\sqrt{K}} + \sigma\sqrt{\frac{2\log(1/\delta_i)}{K}}
\right)\le \delta_i.
\label{eq:bar_eps_tail2}
\end{equation}
Set $\delta_i=\delta/(2n)$ and apply a union bound over $i\in[n]$:
\begin{align}
\mathbb{P}\!\left(
\max_{i\in[n]}\|\bar\varepsilon_i\|_2 \ge \frac{\sigma}{\sqrt{K}} + \sigma\sqrt{\frac{2\log(2n/\delta)}{K}}
\right)
&\le \sum_{i=1}^n \mathbb{P}\!\left(
\|\bar\varepsilon_i\|_2 \ge \frac{\sigma}{\sqrt{K}} + \sigma\sqrt{\frac{2\log(2n/\delta)}{K}}
\right)\notag\\
&\le \sum_{i=1}^n \frac{\delta}{2n}
=\frac{\delta}{2}.
\label{eq:uniform_bar_eps}
\end{align}
Define
\begin{equation}
\eta_{K,n}(\delta)\triangleq \frac{\sigma}{\sqrt{K}} + \sigma\sqrt{\frac{2\log(2n/\delta)}{K}}.
\label{eq:eta_def}
\end{equation}
Then Eq.~\eqref{eq:uniform_bar_eps} states that, with probability at least $1-\delta/2$,
\begin{equation}
\max_{i\in[n]}\|\bar\varepsilon_i\|_2 \le \eta_{K,n}(\delta).
\label{eq:uniform_eta}
\end{equation}

\textbf{Step 2: uniform denoising bound on consensus cores.}
Recall $\bar t_i=\mu_i+\bar\varepsilon_i$ and $c_i=\mathrm{norm}(\bar t_i)$.
On the event in Eq.~\eqref{eq:uniform_eta}, Lemma~\ref{lem:normalize_perturb} with $e=\bar\varepsilon_i$ implies, for every $i\in[n]$,
\begin{equation}
\|c_i-\mu_i\|_2
\le \frac{2\|\bar\varepsilon_i\|_2}{1-\|\bar\varepsilon_i\|_2}
\le \frac{2\eta_{K,n}(\delta)}{1-\eta_{K,n}(\delta)}
\triangleq \varepsilon_{K,n}(\delta).
\label{eq:uniform_core_error}
\end{equation}
Thus, with probability at least $1-\delta/2$,
\begin{equation}
\max_{i\in[n]} \|c_i-\mu_i\|_2 \le \varepsilon_{K,n}(\delta).
\label{eq:uniform_core_error2}
\end{equation}

\textbf{Step 3: transfer from observed risk to true risk.}
Fix any $f\in\mathcal{F}$.
For each $i\in[n]$, Lemma~\ref{lem:lipschitz_transfer} with $v=f(x_i)$, $a=c_i$, and $b=\mu_i$ yields
\begin{equation}
\big|\ell(f(x_i)^\top c_i)-\ell(f(x_i)^\top \mu_i)\big|
\le L\,\|c_i-\mu_i\|_2.
\label{eq:per_sample_transfer}
\end{equation}
On the event in Eq.~\eqref{eq:uniform_core_error2}, Eq.~\eqref{eq:per_sample_transfer} implies
\begin{equation}
\ell(f(x_i)^\top \mu_i)
\le \ell(f(x_i)^\top c_i) + L\,\varepsilon_{K,n}(\delta),
\qquad \forall i\in[n].
\label{eq:per_sample_transfer2}
\end{equation}
Averaging over $i$ gives the empirical relation
\begin{equation}
\widehat R_\mu(f)\triangleq \frac{1}{n}\sum_{i=1}^n \ell(f(x_i)^\top \mu_i)
\le \widehat R_c(f)+L\,\varepsilon_{K,n}(\delta).
\label{eq:empirical_transfer}
\end{equation}
Taking expectation over fresh draws (and using the same Lipschitz argument pointwise) yields the population relation
\begin{equation}
R_\mu(f)\le R_c(f)+L\,\varepsilon_{K,n}(\delta).
\label{eq:population_transfer}
\end{equation}

\textbf{Step 4: PAC generalization on the observed risk and ERM decomposition.}
Apply a standard Rademacher generalization bound to the class $\ell\circ\mathcal{F}$ on the sample $\{(x_i,c_i)\}_{i=1}^n$:
with probability at least $1-\delta/2$,
\begin{equation}
\sup_{f\in\mathcal{F}}\big(R_c(f)-\widehat R_c(f)\big)
\le 2\,\mathfrak{R}_n(\ell\circ\mathcal{F})+\sqrt{\frac{\log(4/\delta)}{2n}}.
\label{eq:rad_gen}
\end{equation}
By Lemma~\ref{lem:contraction}, $\mathfrak{R}_n(\ell\circ\mathcal{F})\le L\,\mathfrak{R}_n(\mathcal{F})$, hence
\begin{equation}
\sup_{f\in\mathcal{F}}\big(R_c(f)-\widehat R_c(f)\big)
\le 2L\,\mathfrak{R}_n(\mathcal{F})+\sqrt{\frac{\log(4/\delta)}{2n}}.
\label{eq:rad_gen2}
\end{equation}
On the intersection of events \eqref{eq:uniform_core_error2} and \eqref{eq:rad_gen2} (which holds with probability at least $1-\delta$ by a union bound),
we bound the excess true risk of $\widehat f$.

Let $f^\star\in\arg\min_{f\in\mathcal{F}} R_\mu(f)$.
Starting from Eq.~\eqref{eq:population_transfer} and using Eq.~\eqref{eq:rad_gen2} twice:
\begin{align}
R_\mu(\widehat f)
&\le R_c(\widehat f)+L\,\varepsilon_{K,n}(\delta)
\label{eq:chain1}\\
&\le \widehat R_c(\widehat f)+\Big(2L\,\mathfrak{R}_n(\mathcal{F})+\sqrt{\tfrac{\log(4/\delta)}{2n}}\Big)+L\,\varepsilon_{K,n}(\delta)
\label{eq:chain2}\\
&\le \widehat R_c(f^\star)+\Big(2L\,\mathfrak{R}_n(\mathcal{F})+\sqrt{\tfrac{\log(4/\delta)}{2n}}\Big)+L\,\varepsilon_{K,n}(\delta)
\label{eq:chain3}\\
&\le R_c(f^\star)+2\Big(2L\,\mathfrak{R}_n(\mathcal{F})+\sqrt{\tfrac{\log(4/\delta)}{2n}}\Big)+L\,\varepsilon_{K,n}(\delta)
\label{eq:chain4}\\
&\le R_\mu(f^\star)+2\Big(2L\,\mathfrak{R}_n(\mathcal{F})+\sqrt{\tfrac{\log(4/\delta)}{2n}}\Big)+2L\,\varepsilon_{K,n}(\delta),
\label{eq:chain5}
\end{align}
where \eqref{eq:chain3} uses ERM optimality of $\widehat f$ on $\widehat R_c$, and \eqref{eq:chain5} uses Eq.~\eqref{eq:population_transfer} for $f^\star$.
Rearranging yields Eq.~\eqref{eq:pac_denoise_bound}.

Finally, if $\eta_{K,n}(\delta)\le \tfrac{1}{2}$, then
$\varepsilon_{K,n}(\delta)=\frac{2\eta}{1-\eta}\le 4\eta$, giving the stated $O(\sigma\sqrt{\log(n/\delta)/K})$ rate.
\end{proof}

\subsubsection{Proof of Corollary~\ref{cor:k_sample_complexity}}
\label{app:cora}
\begin{proof}
From Theorem~\ref{thm:pac_denoise}, it suffices to enforce $2L\,\varepsilon_{K,n}(\delta)\le \epsilon$.
When $\eta_{K,n}(\delta)\le \tfrac{1}{2}$, $\varepsilon_{K,n}(\delta)\le 4\eta_{K,n}(\delta)$, so it suffices that
$8L\,\eta_{K,n}(\delta)\le \epsilon$.
By the definition of $\eta_{K,n}(\delta)$ in Eq.~\eqref{eq:eta_def}, this holds when
\[
\frac{\sigma}{\sqrt{K}}+\sigma\sqrt{\frac{2\log(2n/\delta)}{K}}
\le \frac{\epsilon}{8L},
\]
which is implied by $K=\Omega(\sigma^2\log(n/\delta)/\epsilon^2)$ (absorbing constants and $L$ into $\Omega(\cdot)$).
\end{proof}

\section{Additional Experimental Details}
\label{app:exp_details}

\subsection{Preliminary Study Protocol}
\label{app:prestudy_protocol}

This subsection provides the full protocol for the preliminary study in Sec.~\ref{sec:preliminary_study}.
The study is purely diagnostic.

\textbf{Data and encoder.}
We use multi-caption image--text datasets such as MS-COCO and Flickr30K, where each image $x_i$ is paired with $K$ captions
$\{y_{i,k}\}_{k=1}^{K}$.
Unless stated otherwise, we use all available human captions and set $K=5$.
A frozen dual encoder maps images and captions to normalized embeddings:
\begin{equation}
v_i=\mathrm{norm}(\phi_v(x_i))\in\mathbb{R}^d,
\qquad
t_{i,k}=\mathrm{norm}(\phi_t(y_{i,k}))\in\mathbb{R}^d.
\end{equation}
All image--text similarities are cosine similarities $v^\top t$.

\textbf{Caption-view dispersion.}
We quantify how much the captions of the same image disagree in the frozen text embedding space:
\begin{equation}
d_i^{\mathrm{view}}
=
\frac{2}{K(K-1)}
\sum_{1\le k<\ell\le K}
\bigl(1-t_{i,k}^{\top}t_{i,\ell}\bigr).
\label{eq:app_view_dispersion}
\end{equation}
A larger $d_i^{\mathrm{view}}$ means that different valid captions share less overlap and leave more details unspecified.

\textbf{Caption-induced rank volatility.}
For each caption $y_{i,k}$, we retrieve images from a fixed gallery $\mathcal{G}$ and record the rank of the ground-truth image:
\begin{equation}
R_{i,k}
=
1+
\sum_{x_j\in\mathcal{G},\,j\neq i}
\mathbf{1}
\!\left[
v_j^{\top}t_{i,k}>v_i^{\top}t_{i,k}
\right].
\label{eq:app_caption_rank}
\end{equation}
We then define the rank volatility across captions as
\begin{equation}
\sigma_i^{\mathrm{rank}}
=
\mathrm{Std}_{k\in[K]}
\bigl(\log(1+R_{i,k})\bigr).
\label{eq:app_rank_volatility}
\end{equation}
The logarithm reduces the domination of extreme ranks while preserving caption-induced instability.

\textbf{Hard-negative posterior mass.}
To measure whether caption disagreement makes negatives more competitive, we compute a retrieval posterior over the same gallery:
\begin{equation}
p(j\mid y_{i,k})
=
\frac{\exp(v_j^\top t_{i,k}/\tau_0)}
{\sum_{m:x_m\in\mathcal{G}}\exp(v_m^\top t_{i,k}/\tau_0)} ,
\label{eq:app_retrieval_posterior}
\end{equation}
where $\tau_0$ is the frozen encoder's evaluation temperature.
The hard-negative mass for image $i$ is
\begin{equation}
p_i^{\mathrm{hard}}
=
\frac{1}{K}\sum_{k=1}^{K}
\max_{j\neq i}p(j\mid y_{i,k}).
\label{eq:app_hard_negative_mass}
\end{equation}
A larger value indicates that at least one incorrect image receives high posterior probability under the caption query.

\textbf{Diagnostic residual leakage.}
We next examine whether the image embedding aligns with view-specific textual components.
We first define a diagnostic multi-caption centroid
\begin{equation}
\tilde t_i
=
\mathrm{norm}
\!\left(
\sum_{k=1}^{K}t_{i,k}
\right),
\label{eq:app_diagnostic_centroid}
\end{equation}
which is used only for analysis.
For each caption, we remove the component parallel to this centroid:
\begin{equation}
q_{i,k}
=
t_{i,k}
-
(t_{i,k}^{\top}\tilde t_i)\tilde t_i .
\label{eq:app_diagnostic_residual}
\end{equation}
The residual leakage score is
\begin{equation}
\ell_{i,k}^{\mathrm{res}}
=
\bigl(v_i^{\top}q_{i,k}\bigr)^2 .
\label{eq:app_residual_leakage}
\end{equation}
Large $\ell_{i,k}^{\mathrm{res}}$ means that the image embedding is correlated with the caption component that is not shared by the multi-caption centroid.

\textbf{High-confidence retrieval error.}
For each caption query, we define the top prediction and its confidence as
\begin{equation}
\hat j_{i,k}
=
\arg\max_{j:x_j\in\mathcal{G}}p(j\mid y_{i,k}),
\qquad
\pi_{i,k}
=
\max_{j:x_j\in\mathcal{G}}p(j\mid y_{i,k}).
\end{equation}
The high-confidence error score is
\begin{equation}
o_{i,k}
=
\pi_{i,k}
\cdot
\mathbf{1}\!\left[\hat j_{i,k}\neq i\right].
\label{eq:app_confident_error}
\end{equation}
This score is large only when the frozen model retrieves a wrong image with high confidence.

\textbf{Visualization.}
Fig.~\ref{fig:prestudy_failure_mechanism}(a) plots each image as one point, with $x$-axis $d_i^{\mathrm{view}}$ and $y$-axis $\sigma_i^{\mathrm{rank}}$.
Point color encodes $p_i^{\mathrm{hard}}$, so the plot jointly shows caption disagreement, rank instability, and hard-negative confusion.
Fig.~\ref{fig:prestudy_failure_mechanism}(b) plots each caption view as one point, with $x$-axis $\ell_{i,k}^{\mathrm{res}}$ and $y$-axis $o_{i,k}$.
Point color encodes the parent image's caption-view dispersion $d_i^{\mathrm{view}}$.
For both panels, we overlay binned means and upper quantiles to show the global trend without hiding the dense point distribution.

\textbf{Interpretation.}
The preliminary study is designed to reveal a failure mechanism rather than to evaluate \textsc{TPC}.
It supports the following empirical pattern:
caption disagreement is associated with unstable retrieval ranks, and residual leakage is associated with confident errors.
These two observations motivate the method design in Sec.~\ref{sec:method}: align to a view-shared semantic component, learn a single-caption estimator for deployment, suppress view-specific residual alignment, and calibrate confidence when captions provide weak constraints.

\subsection{Datasets and Protocols}
\label{app:datasets_protocols}

\textbf{Training data and multi-view construction.}
\textsc{Text as Partial Constraint} requires $K$ textual views per image to instantiate a \emph{partial} language constraint and expose an ``unsaid'' subspace.
We therefore use a \emph{two-tier} view construction strategy:

\emph{(A) Human multi-caption datasets.}
For datasets that naturally provide multiple captions, we use \textbf{all available human captions} and set $K{=}5$ (MS-COCO, Flickr30K).
During training, we encode each image \emph{once} and encode its $K$ captions independently, so multi-view supervision increases text-side compute but does not multiply vision-side cost.
To match the step budget of single-caption baselines, we subsample $K' \in \{3,4,5\}$ captions per step (uniformly without replacement) and cycle through captions across epochs; the consensus $c_i$ and uncertainty $u_i$ are always computed on the sampled set.

\emph{(B) Single-caption web-style corpora.}
When only one caption is available, we synthesize view diversity that is faithful but intentionally underspecified.
Given the original caption $y$, we generate $K{-}1$ additional views consisting of:
(i) a \emph{core-only} view that retains only the main entities/actions (drops attributes such as colors, counts, and fine descriptors),
(ii) a \emph{paraphrase} view with similar semantics but different surface form,
and (iii) a \emph{back-translation} view (EN$\rightarrow$DE$\rightarrow$EN) to inject lexical variation.
Unless stated otherwise, we set $K{=}4$ for this regime (original + three synthesized views).
We filter synthesized views to avoid semantic drift by enforcing: (1) length in $[5,30]$ tokens,
(2) no new named entities compared to $y$,
and (3) cosine similarity to the original caption above a threshold under a \emph{frozen} text encoder (we use $\ge 0.25$ as default).
If a view fails, we resample it; if repeated failures occur, we fall back to another core-only deletion sample to preserve correctness.
This construction produces controlled omissions, which is precisely the regime where our residual suppression and uncertainty-aware sharpness are expected to help.

\textbf{Image--text retrieval benchmarks.}
We evaluate retrieval on MS-COCO and Flickr30K using standard splits.
At test time, each query caption $y$ is mapped to the single-view core $\hat c=\mathrm{norm}(\psi_\theta(\mathrm{norm}(\phi_t(y))))$ and each candidate image is mapped to $v=\mathrm{norm}(\phi_v(x))$; ranking is performed by the core similarity $s(x,y)=v^\top \hat c$.
We report image-to-text and text-to-image Recall@\{1,5,10\}.
To stress robustness to underspecification, we additionally report:
(i) \textbf{Mean-over-captions} performance (average Recall@K over all captions of an image),
(ii) \textbf{Worst-caption} performance (minimum over captions),
and (iii) \textbf{Caption sensitivity} (rank standard deviation across captions), see Appendix~\S\ref{app:exp_metrics}.
These explicitly measure whether the model ``over-commits'' to view-specific details.

\textbf{Zero-shot classification and distribution shift.}
We follow a CLIP-style zero-shot protocol with prompt ensembling.
For ImageNet-1K (val), we use a fixed set of templates (e.g., the standard CLIP template set) and average the normalized text embeddings across templates per class.
For transfer, we use dataset-specific template sets when available and otherwise use a small shared template pool (e.g., 8 generic templates).
Crucially, our method uses the \emph{core} text embedding for classification, i.e., class scores are $v^\top \hat c_{\text{class}}$.
We report Top-1 accuracy on clean images and, when enabled, robust accuracy under AutoAttack (\S\ref{app:exp_metrics}).
We also include fine-grained alignment diagnostics that probe attribute/relation sensitivity, which closely matches our ``unsaid'' motivation:
Winoground (compositional correctness), ARO-style attribute/object/relation tests, and caption-contrast sets (e.g., hard negatives formed by swapping attributes/relations).
These are reported as accuracy (higher is better).

\textbf{LVLM downstream and hallucination benchmarks.}
To test whether improved alignment transfers to generation settings, we plug our vision encoder into a fixed LVLM stack.
Concretely, we keep the LLM weights, tokenizer, and instruction-tuning data fixed, and only change the vision encoder.
Because changing the vision encoder can alter feature statistics/dimensions, we retrain (or re-fit) only the vision--language projector for a small number of steps while freezing the LLM (and optionally freezing the vision encoder after a short alignment warmup).
We evaluate: (i) VQAv2 and TextVQA (accuracy) and (ii) POPE (accuracy/precision/recall/F1) for object hallucination.
For open-ended caption hallucination, we report CHAIR.
We present LVLM results separately from dual-encoder retrieval/classification to avoid conflating architectural and training differences.

\subsection{Evaluation Metrics}
\label{app:exp_metrics}

\textbf{Retrieval and caption robustness.}
We report Recall@\{1,5,10\} for both directions.
For caption robustness, given an image with captions $\{y_k\}_{k=1}^{K}$ we compute:
(i) mean Recall@K over captions,
(ii) worst-caption Recall@K,
and (iii) caption-sensitivity as the standard deviation of the (1-indexed) rank of the ground-truth match across captions.
We additionally report a \textbf{disagreement-to-error} analysis: we bucket queries by predicted residual energy
$\hat u(y)=1-(t^\top \hat c)^2$ and report Recall@K and calibration within each bucket, which directly tests whether our uncertainty signal is meaningful.

\textbf{Calibration for retrieval and classification.}
For classification, we compute ECE with $M{=}15$ equal-width bins over top-1 confidence and report NLL.
For retrieval, we convert similarities into a calibrated distribution using our per-query temperature $\tau(y)$.
Because a full softmax over the entire gallery can be expensive, we compute calibration on a \textbf{standardized candidate set}:
for each query we include the ground-truth match, the top-$N$ retrieved candidates (default $N{=}100$), and a fixed number of uniformly sampled negatives (default 300), then apply softmax on this set.
We report ECE/NLL on top-1 correctness, Brier score, and risk--coverage curves (selective retrieval) obtained by abstaining when max softmax probability is below a threshold.

\textbf{Adversarial robustness .}
When comparing to adversarially trained or adversarially fine-tuned baselines, we evaluate robust Top-1 accuracy on ImageNet-1K under AutoAttack with an $\ell_\infty$ threat model and a fixed perturbation radius.
We use a single, fixed radius across all methods (default $\epsilon{=}4/255$) and the same attack settings (steps, restarts) without per-method tuning.

\subsection{Implementation Details}
\label{app:impl_details}

\textbf{Backbone, resolution, and tokenization.}
We use an OpenCLIP-style dual encoder with a ViT vision backbone and a Transformer text encoder.
Unless stated otherwise, images are processed at $224{\times}224$ with standard CLIP augmentations (random resized crop and horizontal flip for training; center crop for evaluation).
Texts are truncated/padded to a fixed maximum length (77 tokens in the CLIP tokenizer) and encoded with the backbone tokenizer.
We report results for two model scales: ViT-B/16 for fast ablations and ViT-L/14 for main results.

\textbf{Core filter initialization and stability.}
The single-view core filter $\psi_\theta(t)=W_ct+b_c$ is initialized to be near-identity ($W_c \leftarrow I$ and $b_c \leftarrow 0$), so $\hat c$ starts close to the original text direction and training focuses on \emph{removing} inconsistent components rather than rotating the space arbitrarily.
We clamp the instance temperature to avoid extremes, i.e., $\tau_i=\tau_0(1+\gamma u_i)$ is clipped to $[\tau_0,\, 3\tau_0]$ by default, which stabilizes early training when disagreement estimates are noisy.

\textbf{Optimization and schedule.}
We use AdamW with cosine learning-rate decay and linear warmup.
A typical configuration is: global batch size 4096 (via gradient accumulation if needed), warmup 5\% of total steps, and total 100k--200k steps depending on data scale.
We use mixed precision (bf16) and gradient clipping (max norm 1.0).
We set separate learning rates for (i) encoders and (ii) the core filter: $\text{LR}_{\text{enc}}=5\times10^{-5}$ and $\text{LR}_{\psi}=5\times10^{-4}$ by default, with weight decay $0.1$.
Unless otherwise stated, we tune only $\text{LR}_{\text{enc}}$ and keep $\text{LR}_{\psi}$ fixed.

\textbf{Method hyperparameters.}
We use $K{=}5$ on COCO/Flickr30K and $K{=}4$ on synthesized-view corpora.
Unless otherwise stated, we set $\tau_0{=}0.07$, $\gamma{=}1.0$, $\lambda_{\mathrm{agree}}{=}1.0$, and $\lambda_{\mathrm{nc}}{=}0.1$.
We tune $\gamma\in\{0,0.5,1,2\}$, $\lambda_{\mathrm{agree}}\in\{0.5,1,2\}$, and $\lambda_{\mathrm{nc}}\in\{0.05,0.1,0.2,0.5\}$ on a held-out validation split.
For ablations, we also report: (i) removing uncertainty scaling ($\gamma{=}0$), (ii) removing residual suppression ($\lambda_{\mathrm{nc}}{=}0$), and (iii) using consensus-only training without the single-view filter.

\textbf{Compute and reproducibility.}
Training is run with distributed data parallel on A100-class GPUs.
We match global batch size and total optimization steps across methods whenever feasible. For multi-view training, the image forward pass is shared across views to keep compute comparable.
We run each main setting with three random seeds and report mean$\pm$std.

\subsection{Compared Methods}
\label{app:baselines}

\textbf{CLIP-style contrastive alignment.}
We use the standard CLIP objective as the primary baseline~\citep{pmlr-v139-radford21a}, i.e., symmetric InfoNCE over image--text pairs with cosine similarity.
We also include OpenCLIP~\citep{cherti2023reproducible} as an open, large-scale reproduction with widely used checkpoints and evaluation protocols.

\textbf{Stronger contemporary training recipes.}
We compare to SigLIP~\citep{zhai2023sigmoid}, which replaces softmax-normalized contrastive loss with a pairwise sigmoid objective, and EVA-CLIP~\citep{sun2023evaclip}, which introduces improved scaling/training techniques for CLIP at scale.
When these baselines are used as \emph{off-the-shelf} checkpoints trained on different corpora, we report them as ``external checkpoints'' under the same evaluation suite; when data access permits, we additionally retrain with our data/budget and report the matched-data results.

\textbf{Robust/anti-misalignment training and adaptation.}
We include TeCoA~\citep{mao2023understanding}, which adversarially fine-tunes CLIP-style models (min--max optimization) to improve zero-shot adversarial robustness under distribution shifts.
We include RobustCLIP/FARE~\citep{pmlr-v235-schlarmann24a}, which performs \emph{unsupervised} adversarial fine-tuning of the \emph{vision embedding} while explicitly preserving the original CLIP features, enabling plug-and-play robustness gains in downstream LVLMs.
We include Robust SuperAlignment (Adv-W2S)~\citep{dong2025robustsuperalignment}, which extends weak-to-strong generalization to \emph{robustness transfer} by incorporating adversarial examples into the alignment objective.
These methods incur additional adversarial-example generation and/or teacher guidance; we therefore (i) report them in a separate robustness block, (ii) keep the backbone and evaluation protocol identical, and (iii) match training steps/compute as closely as possible.

\textbf{Post-hoc concept-level enhancement.}
We compare against VL-SAE~\citep{shen2025vlsae}, which learns a unified concept set via sparse autoencoding of vision--language representations and can be used to strengthen alignment at the concept level.
We apply VL-SAE as a post-hoc module on top of the same frozen encoders (no encoder retraining), isolating the effect of the enhancement mechanism.

\textbf{LVLM-side alignment baselines.}
To test transfer to generative multimodal models, we follow the plug-and-play LVLM protocol: we fix the LVLM framework (e.g., LLaVA~\citep{liu2023llava} or OpenFlamingo~\citep{awadalla2023openflamingo}) and only swap the vision encoder (ours vs.\ baseline encoders).
We further compare against LVLM alignment improvements that modify the connector/training, including patch-aligned training~\citep{jiang2025finegrained} and Directed-Tokens~\citep{truong2025directedtokens}.
LVLM-side methods are not directly comparable to dual-encoder retrieval training; we therefore report LVLM results separately and keep the LLM, connector architecture, and SFT data identical unless the baseline explicitly changes them.

\subsection{Main Experiment Details}
\label{app:experimental_details}

\paragraph{Residual Leakage Diagnostic.}
\label{app:residual_leakage_protocol}
This diagnostic evaluates whether training changes the degree to which image embeddings align with caption-specific components that are not shared across views.
For each image $x_i$ with $K$ captions $\{y_{i,k}\}_{k=1}^{K}$, we compute normalized embeddings
\begin{equation}
v_i=\mathrm{norm}(\phi_v(x_i)),
\qquad
t_{i,k}=\mathrm{norm}(\phi_t(y_{i,k})) .
\end{equation}
We define a caption-consensus core using all available caption views:
\begin{equation}
c_i=\mathrm{norm}\!\left(\sum_{k=1}^{K} t_{i,k}\right).
\end{equation}
For each caption view, we remove the component parallel to the consensus core and obtain the diagnostic residual
\begin{equation}
r_{i,k}
=
t_{i,k}-(t_{i,k}^{\top}c_i)c_i .
\end{equation}
The residual leakage score is then
\begin{equation}
\ell_{i,k}
=
(v_i^{\top}r_{i,k})^2 .
\end{equation}
A larger $\ell_{i,k}$ means that the image representation is more strongly correlated with a view-specific textual component rather than the shared semantic core.
We compute $\ell_{i,k}$ before and after fine-tuning for standard CLIP and \textsc{TPC}.
For a fair comparison, the residual space is always defined by the multi-caption consensus $c_i$ rather than by the learned single-view core predictor $\psi_\theta$.
Fig.~\ref{fig:residual_leakage} visualizes the distribution over all image--caption pairs, with mean and 95\% confidence intervals estimated across seeds and dataset-wise means shown as small markers.

\paragraph{Selective Risk--Coverage from \texorpdfstring{$\hat u$}{u}.}
\label{app:risk_coverage_protocol}
This diagnostic evaluates whether the uncertainty proxy $\hat u$ identifies queries whose text constraints are weak.
Given a query caption $y$, we compute
\begin{equation}
t=\mathrm{norm}(\phi_t(y)),
\qquad
\hat c=\mathrm{norm}(\psi_\theta(t)),
\qquad
\hat u(y)=1-(t^\top \hat c)^2 .
\end{equation}
Intuitively, $\hat u(y)$ is large when the original text embedding contains substantial energy outside the predicted core direction, indicating that the query may contain view-specific or underspecified information.
Let $\mathcal{Q}$ denote the evaluation query set and let $e_m(q)\in\{0,1\}$ be the top-1 error of method $m$ on query $q$.
For threshold $\theta$, we retain the low-uncertainty subset
\begin{equation}
\mathcal{Q}_\theta
=
\{q\in\mathcal{Q}\mid \hat u(q)\le \theta\}.
\end{equation}
We then compute coverage and method-specific risk as
\begin{equation}
\mathrm{coverage}(\theta)
=
\frac{|\mathcal{Q}_\theta|}{|\mathcal{Q}|},
\qquad
\mathrm{risk}_m(\theta)
=
\frac{1}{|\mathcal{Q}_\theta|}
\sum_{q\in\mathcal{Q}_\theta} e_m(q).
\end{equation}
Sweeping $\theta$ from small to large yields a risk--coverage curve.
Lower risk at the same coverage indicates that the retained low-$\hat u$ queries are indeed more reliable.
In Fig.~\ref{fig:risk_coverage_u}, all methods are evaluated on the same retained query subsets induced by $\textsc{TPC}$'s $\hat u$, so differences reflect prediction reliability under the same abstention policy.
Solid curves report the overall average, faint curves report per-dataset trends, and shaded bands indicate $\pm1$ standard deviation over three seeds.

\subsection{Compute Resources}
\label{app:compute_resources}

All experiments are run with distributed data parallel training on NVIDIA A100-class GPUs. Unless otherwise stated, training uses bf16 mixed precision, gradient accumulation to maintain the stated global batch size, and shared image forward passes across caption views. Each worker has 80GB GPU memory, 64 CPU cores, approximately 512GB host memory, and local NVMe storage for dataset caching. The main software stack uses PyTorch, CUDA, NCCL, OpenCLIP-style data loading, and deterministic evaluation scripts.

Table~\ref{tab:compute_resources} reports the approximate compute required to reproduce the reported experiments. GPU-hours are computed as
\[
\text{GPU-hours} = \#\text{GPUs} \times \text{wall-clock hours}.
\]
The total compute for experiments reported in the paper is approximately 2.8K--3.2K A100 GPU-hours, including main training, ablations, sensitivity runs, robustness evaluation, LVLM projector fitting, and controlled-deletion evaluation. Including preliminary pilot runs and unsuccessful early variants, the full research process used approximately 3.6K A100 GPU-hours.

\begin{table}[t]
\centering
\small
\setlength{\tabcolsep}{4.5pt}
\caption{\textbf{Approximate compute resources.}
Runtime is measured per run unless otherwise stated.}
\label{tab:compute_resources}
\resizebox{\linewidth}{!}{
\begin{tabular}{lcccccc}
\toprule
Experiment
& Backbone / Model
& GPUs
& Peak Mem. / GPU
& Steps / Eval Size
& Wall Time
& GPU-hours \\
\midrule
Main \textsc{TPC} training
& ViT-L/14 dual encoder
& $8\times$A100-80GB
& 67GB
& 160K steps
& 31h
& 248 \\
Fast ablations
& ViT-B/16 dual encoder
& $4\times$A100-80GB
& 43GB
& 100K steps
& 11h
& 44 \\
Hyperparameter sensitivity
& ViT-B/16 dual encoder
& $4\times$A100-80GB
& 44GB
& 80K--100K steps
& 9--12h
& 36--48 \\
Backbone generalization
& ViT / ConvNeXt / RN variants
& $4$--$8\times$A100-80GB
& 38--72GB
& 80K--160K steps
& 10--39h
& 40--312 \\
Retrained CLIP-style baselines
& matched backbone
& $4$--$8\times$A100-80GB
& 40--66GB
& 100K--160K steps
& 12--30h
& 48--240 \\
AutoAttack robustness evaluation
& ViT-L/14
& $4\times$A100-80GB
& 36GB
& ImageNet val
& 8.5h/model
& 34/model \\
COCO/Flickr retrieval evaluation
& ViT-L/14
& $1\times$A100-80GB
& 24GB
& full test split
& 1.2h/model
& 1.2/model \\
LVLM projector fitting
& LLaVA-1.5-7B stack
& $8\times$A100-80GB
& 58GB
& 12K steps
& 5.5h
& 44 \\
POPE/VQA evaluation
& LLaVA-1.5-7B stack
& $4\times$A100-80GB
& 48GB
& benchmark eval
& 3--6h/model
& 12--24/model \\
Controlled-deletion span extraction
& Llama-3.1-8B-Instruct
& $1\times$A100-80GB
& 31GB
& cached captions
& 4.5h
& 4.5 \\
Controlled-deletion retrieval eval
& ViT-L/14
& $1\times$A100-80GB
& 24GB
& deletion grid
& 2.8h/model
& 2.8/model \\
\bottomrule
\end{tabular}}
\end{table}

The most expensive components are main ViT-L/14 training, backbone generalization, and robustness evaluation. In contrast, the proposed core filter itself adds only $O(d^2)$ parameters and negligible inference overhead relative to the dual encoder. Multi-view training increases text-side computation, but the image encoder is evaluated once per image, so the dominant vision-side cost remains comparable to CLIP-style training under the same batch size and optimization steps.

\section{Additional Experimental Results}
\label{app:results}

% =========================
% Caption-choice sensitivity (Main paper)
% =========================
\subsection{Consensus Core Stability: Does Caption-Choice Sensitivity Decrease?}
\label{sec:caption_stability}
We test the central ``text as partial constraint'' hypothesis:
if the model aligns vision to a shared consensus core, retrieval should be less sensitive to which caption view is used as the query.

For each image $x_i$ with captions $\{y_{i,k}\}_{k=1}^{K}$, we run text-to-image retrieval using each caption as the query and record the rank of the ground-truth image in the gallery, denoted $r_{i,k}$ (1 is best).
We then compute two caption-sensitivity metrics:
\begin{equation}
\mathrm{RankStd}(i)=\mathrm{Std}\big(\{r_{i,k}\}_{k=1}^K\big),
\qquad
\mathrm{WorstGap}(i)=\max_k r_{i,k}-\min_k r_{i,k}.
\end{equation}
Lower values indicate that the model is more stable to caption choice.
We report distributions over images and compare CLIP-style training vs.\ \textsc{TPC}.

\begin{figure}[t]
\centering
\includegraphics[width=0.96\linewidth]{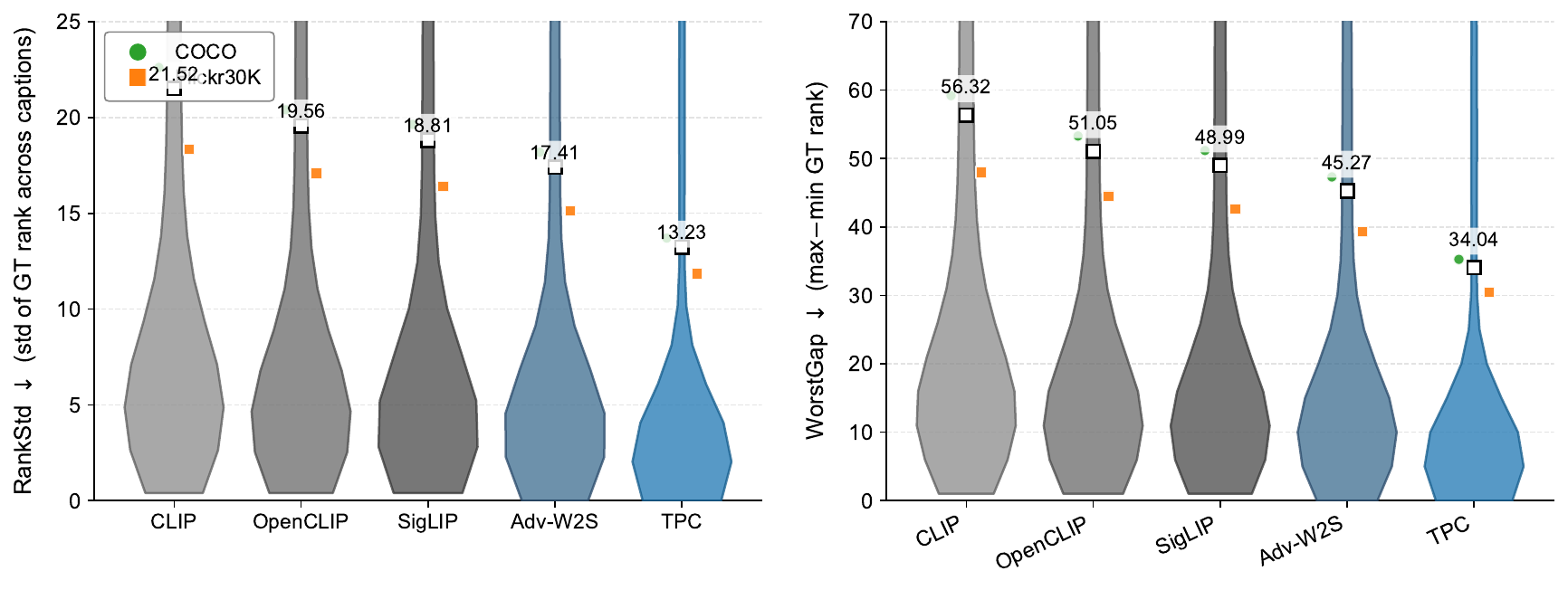}
\caption{\textbf{Caption-choice sensitivity.}
Violin plots show per-image distributions of RankStd and WorstGap on COCO and Flickr30K.
Dots and error bars indicate mean and 95\% CI; faint points show dataset-wise means.
Lower is better: less variation in retrieval rank across captions of the same image.}
\label{fig:caption_sensitivity}
\end{figure}

Fig.~\ref{fig:caption_sensitivity} shows that \textsc{TPC} substantially reduces caption-choice sensitivity:
both RankStd and WorstGap distributions shift downward and exhibit shorter tails than CLIP,
indicating fewer brittle failures driven by view-specific details.
This supports our design choice of aligning to a consensus core while suppressing residual commitment.

\subsection{Single-Caption Core Filter: Does $\psi_\theta$ Recover the Multi-View Consensus?}
\label{sec:core_filter_cdf}

We verify that the single-view core filter $\psi_\theta$ is not a shortcut, but actually learns to approximate the
multi-view intersection semantics captured by the consensus core $c$.

For each image $x_i$ with captions $\{y_{i,k}\}_{k=1}^{K}$, we compute the consensus core
$c_i=\mathrm{norm}(\sum_{k} t_{i,k})$ with $t_{i,k}=\mathrm{norm}(\phi_t(y_{i,k}))$.
For each caption view, we predict a single-caption core $\hat c_{i,k}=\mathrm{norm}(\psi_\theta(t_{i,k}))$ and measure
\begin{equation}
\rho_{i,k} \;=\; \cos(\hat c_{i,k},c_i) \;=\; \hat c_{i,k}^{\top}c_i \in [-1,1].
\end{equation}
We plot the empirical CDF $F(\alpha)=\mathbb{P}(\rho\le \alpha)$ across caption views.
As baselines, we include an \textbf{Identity} ``filter'' ($\hat c=t$) and an \textbf{w/o-agree} variant that removes the agreement loss ($\lambda_{\mathrm{agree}}{=}0$).
A better core filter yields a CDF that is shifted right (more mass at high $\rho$), consistently across datasets.

\begin{figure}[t]
\centering
\includegraphics[width=0.8\linewidth]{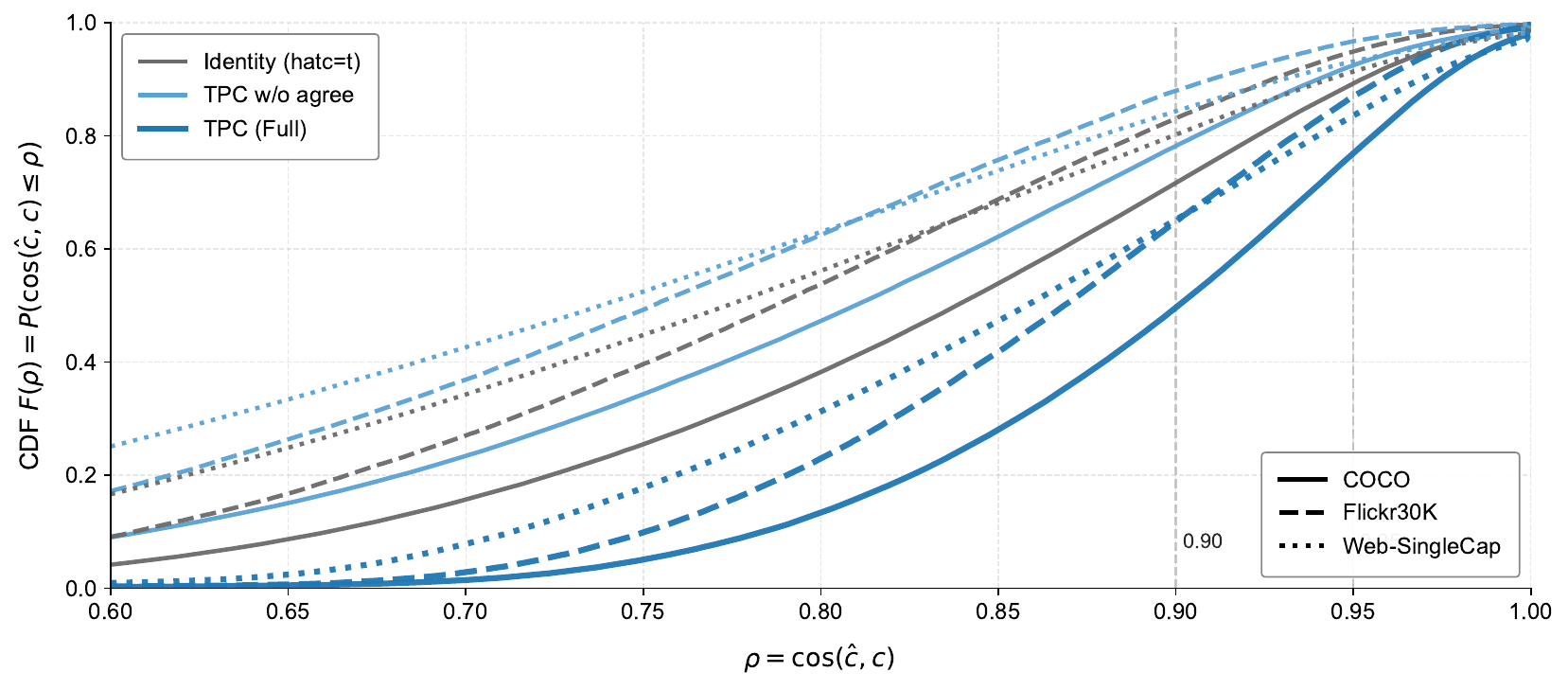}
\caption{\textbf{Does $\psi_\theta$ predict the consensus core?}
Empirical CDFs of $\rho=\cos(\hat c,c)$ on COCO, Flickr30K, and a single-caption web-style set.
Colors denote methods; line styles denote datasets.
A right-shifted curve indicates that single-caption cores $\hat c$ better match the multi-view consensus $c$.}
\label{fig:core_filter_cdf}
\end{figure}

Fig.~\ref{fig:core_filter_cdf} shows that \textsc{TPC} yields a clear right-shift in the $\cos(\hat c,c)$ distribution across datasets,
while the Identity baseline exhibits a heavier low-similarity tail.
Removing agreement supervision collapses much of the gain, indicating that $\psi_\theta$ learns a deployable single-caption estimator of shared semantics rather than exploiting dataset artifacts.

\subsection{When Helpful: Gain vs.\ Text-Only Solvability}
\label{sec:when_helpful_text_bias}

\emph{when is \textsc{TPC} most useful?}
If text is underspecified (weak constraint), a robust aligner should benefit more from suppressing residual commitment;
if text is already sufficient (strong bias), gains should diminish.

For each VQA query $q=(x,y)$, we compute a \emph{text-only solvability} score by running the same LVLM with the image masked
(or replaced by a null image) and taking the maximum answer confidence:
$s_{\text{text}}(q)=\max_a p(a\mid y,\varnothing)\in[0,1]$.
We then evaluate the full LVLM with (i) a baseline vision encoder and (ii) the \textsc{TPC} vision encoder, and compute
the \emph{accuracy gain} within bins of similar $s_{\text{text}}$.
Concretely, we sort examples by $s_{\text{text}}$ and form equal-count bins; each plotted point is a bin with
$x=\mathbb{E}[s_{\text{text}}]$ and $y=\Delta\mathrm{Acc}=\mathrm{Acc}_{\textsc{TPC}}-\mathrm{Acc}_{\text{Base}}$ (in percentage points).

\begin{figure}[t]
\centering
\includegraphics[width=0.85\linewidth]{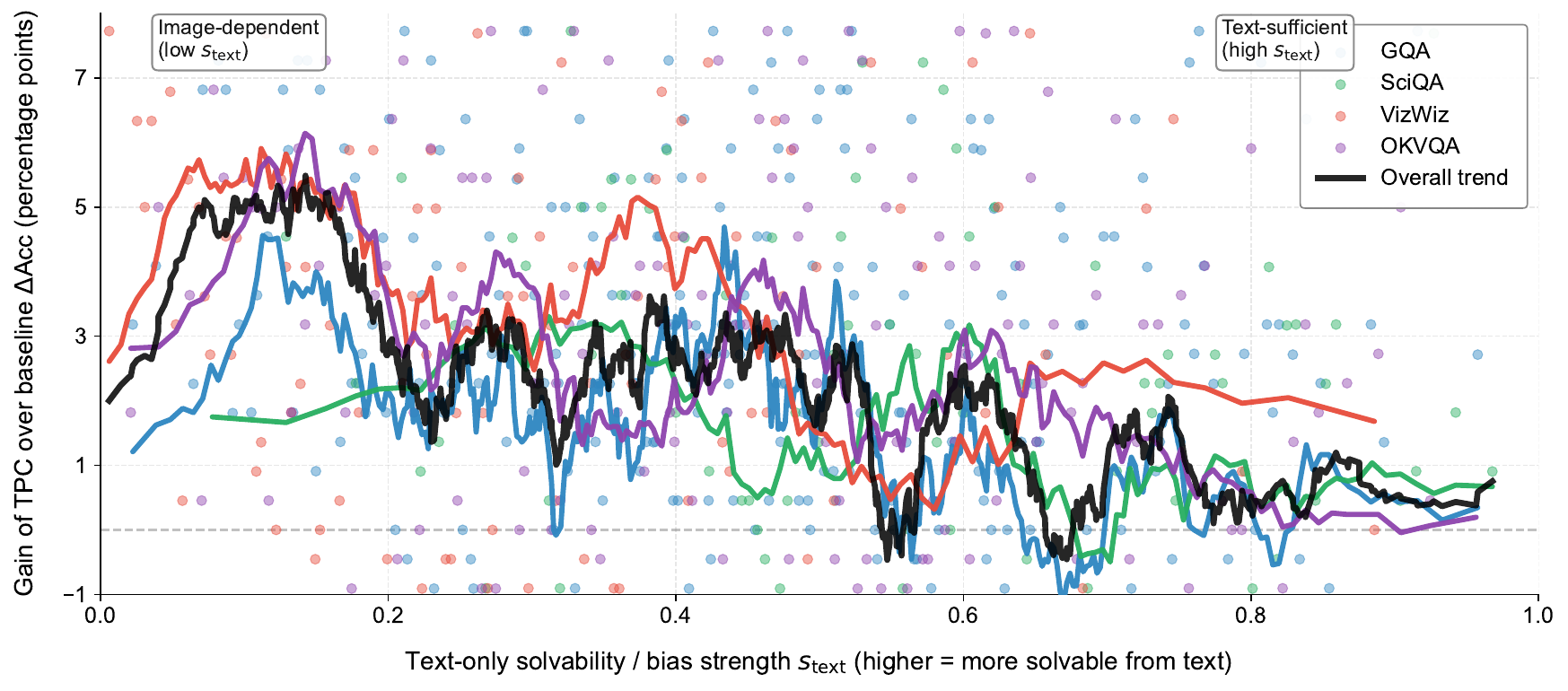}
\caption{\textbf{Gain vs.\ text-only solvability.}
Each point is an equal-count bin (hundreds of bins per dataset); colors indicate datasets.
The trend line is a smoothed running mean over bins (same color per dataset) and an overall trend (black).
Higher gain at lower solvability indicates \textsc{TPC} is most beneficial when text is a weak/underspecified constraint.}
\label{fig:when_helpful_textsolv}
\end{figure}

Fig.~\ref{fig:when_helpful_textsolv} shows a clear pattern: gains are largest when $s_{\text{text}}$ is low (image-dependent queries),
and shrink toward zero as $s_{\text{text}}$ increases (text already sufficient or strongly biased).
This supports our hypothesis that \textsc{TPC} is particularly helpful under underspecified language supervision,
where suppressing residual alignment prevents brittle over-commitment.

\subsection{Backbone/Scale Generalization}
\label{sec:backbone_generalization}
We test whether \textsc{TPC} is robust to architectural choices by swapping the vision backbone while keeping the
training recipe as consistent as possible.
We train \textsc{TPC} with identical objectives and a matched schedule across
ViT-S/16, ViT-B/16, ViT-L/14, ViT-H/14, ConvNeXt-B, ConvNeXt-L, and RN50.
We evaluate (i) ImageNet Top-1 clean and AutoAttack robust accuracy ($\epsilon{=}2/255$),
and (ii) MS-COCO text$\rightarrow$image Recall@\{1,5,10\}.
We visualize results with a parallel-coordinates plot where each polyline corresponds to one backbone
(faint lines: individual seeds; bold line: mean over 3 seeds).

\begin{figure}[t]
\centering
\includegraphics[width=0.95\linewidth]{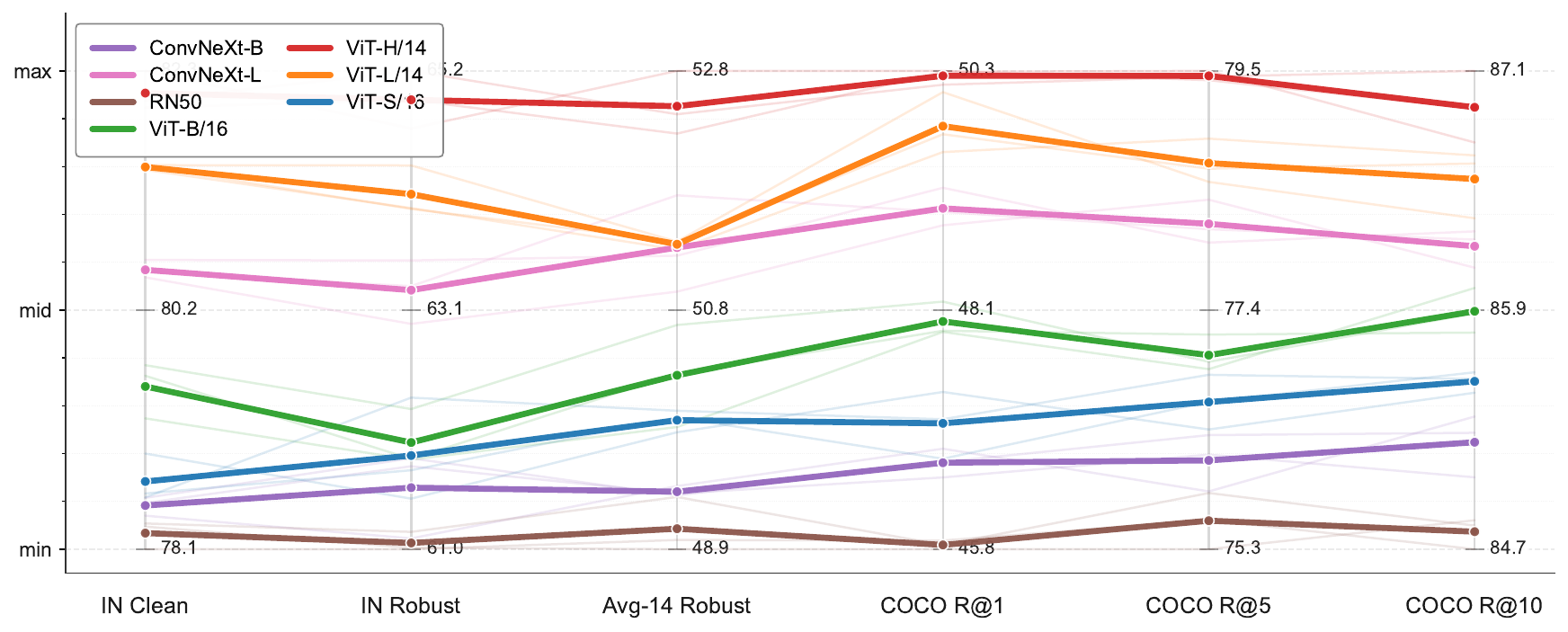}
\caption{\textbf{Cross-backbone robustness and retrieval.}
Parallel coordinates over multiple metrics (each axis is min--max normalized for readability).
Each backbone is shown with three seed runs (faint) and the mean (bold).
Consistent trends across architectures indicate \textsc{TPC} does not rely on a particular backbone family.}
\label{fig:backbone_parallelcoords}
\end{figure}

Fig.~\ref{fig:backbone_parallelcoords} shows that \textsc{TPC} yields stable performance across both ViT and ConvNeXt families:
larger backbones improve clean accuracy and COCO recall as expected, while robust accuracy remains consistently high without collapse.
This suggests the core--residual decomposition and uncertainty modulation transfer across architectural inductive biases.

\subsection{Systematic Missing-Detail Stress Test via Controlled Caption Deletion}
\label{sec:stress_deletion}

We perform a direct \emph{text-as-partial-constraint} stress test by systematically removing critical details from captions.
If a method over-commits to view-specific textual fragments, its performance and calibration should degrade sharply as deletions increase.
In contrast, \textsc{TPC} should degrade more gracefully due to residual suppression and uncertainty-aware smoothing.

Given a caption $y$, we apply a controlled deletion operator that removes a fixed fraction of tokens belonging to one of three semantic categories:
\textbf{Attributes} (color/size/material/adjectives), \textbf{Relations} (spatial/interaction predicates), and \textbf{Counts} (numerals/quantifiers).
We test deletion ratios in $\{10\%,30\%,50\%,70\%\}$ and evaluate text$\rightarrow$image retrieval on COCO/Flickr30K.
For each condition, we report \textbf{Recall@1} (performance; higher is better) and \textbf{ECE} (calibration; lower is better)
computed from retrieval confidence (top-1 softmax probability over a standardized candidate set).

\textbf{Data construction: LLM-assisted controlled deletion.}
The controlled deletion operator is built with a text-only span extractor driven by an off-the-shelf instruction-tuned LLM.
Concretely, we use \textbf{Llama-3.1-8B-Instruct} to tag verbatim spans in the caption that correspond to
\textit{Attributes}, \textit{Relations}, and \textit{Counts}, and we then perform \emph{programmatic} deletions to meet the target ratio.
We run the extractor with deterministic decoding (temperature $0$, top-$p{=}1$) and cache all extracted spans for reproducibility.
To enforce \emph{strict controllability}, the LLM is not allowed to paraphrase or introduce new content: it only returns spans that
appear exactly in the original caption.
Given the returned spans for a target type, we uniformly sample spans (with a fixed random seed) until the removed token count
reaches the specified deletion ratio (10/30/50/70\%) of all tokens covered by that type; if the caption contains too few tokens of a type,
we delete all available ones.
Finally, we apply a minimal deterministic cleanup (collapse whitespace, remove dangling commas/prepositions) without any rewriting.

\begin{tcolorbox}[
  colback=black!2,
  colframe=black!20,
  boxrule=0.6pt,
  arc=2mm,
  left=1.2mm,right=1.2mm,top=0.8mm,bottom=0.8mm,
  title=\textbf{Prompt template for LLM span extraction (used to build deletion sets)},
  fonttitle=\small\bfseries,
  fontupper=\ttfamily\small
]
System: You are a precise annotation tool. You must not rewrite text.
You only copy exact substrings that already appear in the caption.

User:
Given the following image caption, extract verbatim spans that belong to three semantic categories:

(1) ATTRIBUTES: colors, sizes, materials, adjectival modifiers, descriptive properties (e.g., "red", "wooden", "small", "striped").
(2) RELATIONS: spatial or interaction relations/predicates connecting entities (e.g., "next to", "on top of", "holding", "behind").
(3) COUNTS: numerals and quantifiers (e.g., "two", "3", "several", "a pair of").

Rules:
- Output MUST be a single valid JSON object with keys: "attributes", "relations", "counts".
- Each value is a list of strings, and each string MUST be an exact substring copied from the caption (verbatim, case-preserving).
- Spans should be minimal and non-overlapping. Do NOT invent spans that are not present.
- If a category has no span, output an empty list for that key.
- Do NOT output anything other than the JSON.

Caption:
\verb|<<<CAPTION_TEXT>>>|
\end{tcolorbox}

\begin{figure}[t]
\centering
\includegraphics[width=0.98\linewidth]{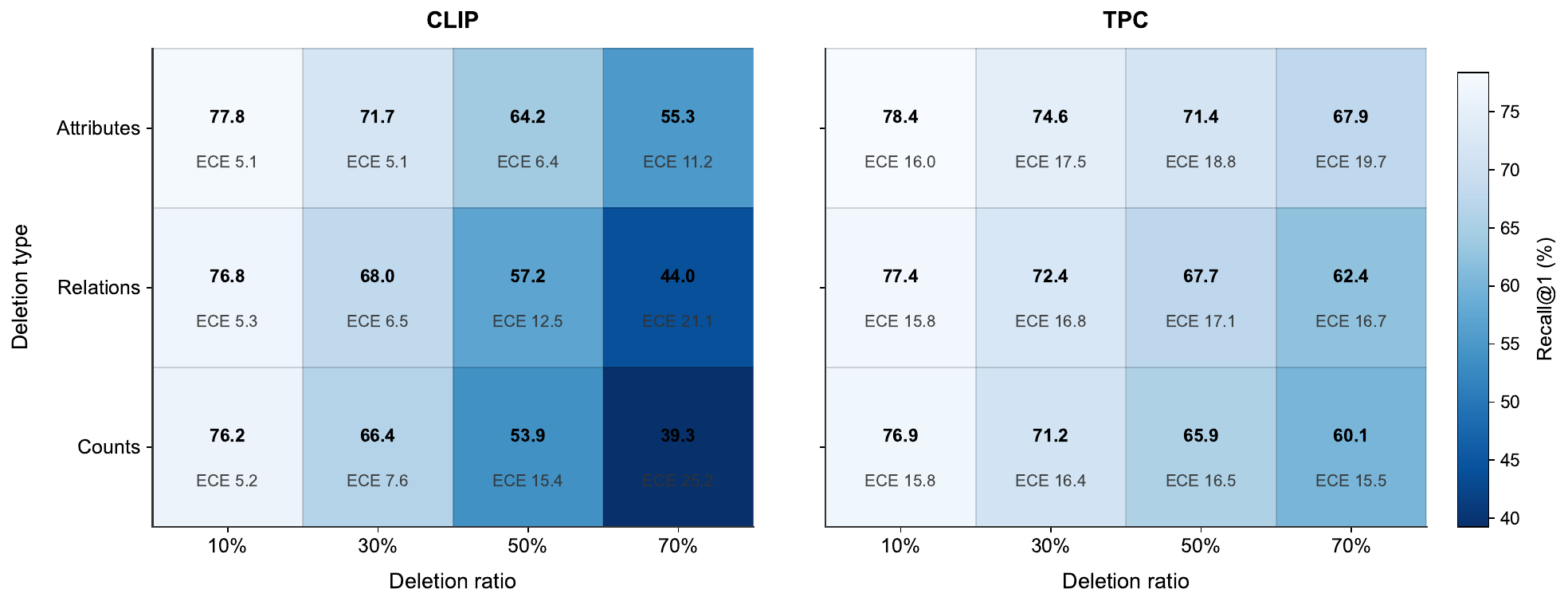}
\caption{\textbf{Controlled caption deletion stress test.}
Heatmaps show Recall@1 (\%) under increasing deletion ratios (x-axis) and deletion types (y-axis).
Each cell is annotated with \emph{Recall@1 / ECE} (ECE shown as a small number).
\textsc{TPC} maintains higher accuracy and lower ECE as deletions grow, indicating robustness to systematically weakened language constraints.}
\label{fig:detail_deletion_heatmap}
\end{figure}

Fig.~\ref{fig:detail_deletion_heatmap} shows that \textsc{TPC} is consistently more resilient to missing details:
(1) performance drops are milder under severe deletions, especially for \emph{relations} and \emph{counts} where underspecification is most damaging;
(2) calibration degrades more slowly, suggesting $\hat u$-modulated sharpness prevents overconfident retrieval when the text constraint is weakened.
Overall, the stress test supports our thesis that language supervision is partial and that explicitly suppressing residual commitment improves robustness.

%%%%%%%%%%%%%%%%%%%%%%%%%%%%%%%%%%%%%%%%%%%%%%%%%%%%%%%%%%%%
\section{Responsible Research and Reproducibility}
\label{app:responsible}
\subsection{Scope and Limitations}
\label{app:scope_limitations}

\textsc{TPC} is designed as a representation-level alignment method for settings where language supervision is naturally partial or can be instantiated through multiple faithful caption views. Its main scope is therefore robust image--text matching, calibrated retrieval, zero-shot transfer, and plug-in transfer to LVLMs through a stronger vision encoder. The method is not intended to replace task-specific safety filters, factuality checkers, or policy-level controls in deployed LVLM systems; rather, it provides a more reliable alignment backbone that can complement these downstream safeguards.

The method uses multi-view captions during training to estimate a consensus semantic core. When human multi-caption annotations are available, we use the provided views directly; when only one caption is available, we construct faithful auxiliary views through controlled deletion, paraphrasing, and back-translation, followed by deterministic filtering. This setting matches many existing vision--language resources and can be extended with retrieval- or generation-based view construction. In our experiments, performance is stable across $K\in\{3,4,5\}$ and saturates around four to five views, suggesting that \textsc{TPC} does not require a large number of captions per image.

Computationally, \textsc{TPC} adds text-side view encoding and a lightweight linear core filter, while sharing the image forward pass across views. As a result, the additional overhead is moderate compared with full adversarial training or LVLM instruction tuning. In downstream use, inference remains single-text and uses only the predicted core embedding, so the deployment cost is essentially the same order as a standard CLIP-style dual encoder.

\subsection{Statistical Reporting Across Seeds}
\label{app:statistical_reporting}

Unless otherwise stated, trainable \textsc{TPC} variants are run with three random seeds. The seeds affect model initialization of the core filter, data shuffling, caption-view subsampling, and minibatch ordering. We report mean values in the main result tables for readability and use mean$\pm$standard deviation in sensitivity and diagnostic figures. For metrics whose distributions are asymmetric, such as residual leakage and rank-based quantities, we additionally use bootstrap confidence intervals over examples or caption views.

Table~\ref{tab:seed_variability_tpc} reports seed-level variability for the principal \textsc{TPC} metrics. The standard deviations are small relative to the margins over the strongest baselines, indicating that the main conclusions are not driven by a single favorable run.

\begin{table}[t]
\centering
\small
\setlength{\tabcolsep}{5.5pt}
\caption{\textbf{Seed variability for main \textsc{TPC} results.}
Mean$\pm$standard deviation over three random seeds.}
\label{tab:seed_variability_tpc}
\resizebox{\linewidth}{!}{
\begin{tabular}{lcccccc}
\toprule
Setting
& ImageNet Clean
& ImageNet Robust
& Avg-14 Clean
& Avg-14 Robust
& POPE Avg F1
& VQA Avg \\
\midrule
\textsc{TPC} (ViT-L/14)
& $81.42{\pm}0.12$
& $64.05{\pm}0.27$
& $76.19{\pm}0.18$
& $52.03{\pm}0.31$
& $85.16{\pm}0.22$
& $61.25{\pm}0.19$ \\
\bottomrule
\end{tabular}}
\end{table}

For external checkpoints that are evaluated without retraining, we report deterministic evaluation numbers under the same preprocessing and evaluation scripts. For retrained baselines and ablations, we use the same three-seed protocol whenever the corresponding method is computationally comparable.

\subsection{Broader Impact and Responsible Use}
\label{app:broader_impacts}

\textsc{TPC} aims to improve the reliability of vision--language representations under underspecified language supervision. Potential positive impacts include more stable image--text retrieval, better calibrated confidence under ambiguous captions, and reduced object hallucination when the resulting vision encoder is used inside LVLM systems. These properties are useful for applications where users provide incomplete or paraphrased descriptions and where overconfident visual grounding can lead to unreliable downstream responses.

The same improvements may also increase the capability of downstream multimodal systems. If such systems are deployed without proper safeguards, stronger visual grounding could still be misused in settings such as surveillance, automated profiling, or misleading content generation. Moreover, calibrated confidence should not be interpreted as a guarantee of factual correctness or safety. We therefore view \textsc{TPC} as a representation-level reliability component rather than a complete deployment solution.

Responsible use should combine \textsc{TPC} with application-level safety filters, dataset and demographic audits, human oversight in high-stakes settings, and clear communication of model uncertainty. We do not release a new user-facing model or scraped dataset at submission time.

\subsection{LLM Usage Disclosure}
\label{app:llm_usage}

The core \textsc{TPC} method does not use an LLM to generate training labels, optimize model parameters, or produce predictions. The only methodological use of an LLM is in the controlled caption-deletion stress test in Appendix~\ref{sec:stress_deletion}. Specifically, we use Llama-3.1-8B-Instruct as a deterministic span extractor to identify verbatim caption spans corresponding to attributes, relations, and counts. The LLM is constrained to return exact substrings from the original caption and is not allowed to paraphrase, introduce new content, or modify the image--text label.

\end{document}